\theoremstyle{plain}
\newtheorem{thm}{Theorem}[section]
\newtheorem{lem}[thm]{Lemma}
\theoremstyle{definition}
\theoremstyle{remark}
\newcommand{\algonamell}{LLL}
\def\vbeta{{\bm{\beta}}}
\newcommand{\Unif}{{\mathrm{Unif}}}
\def\mTheta{{\bm{\Theta}}}
\def\hatB{{\widehat{\mB}}}
\def\hatW{{\widehat{\mW}}}
\def\hatvtheta{{\widehat{\vtheta}}}
\def\hatmTheta{{\widehat{\mTheta}}}
\def\hatvw{{\widehat{\vw}}}
\def\hatvb{{\widehat{\vb}}}
\def\hatmB{{\widehat{\mB}}}
\def\hatmW{{\widehat{\mW}}}
\def\hatmM{{\widehat{\mM}}}
\def\tildevtheta{{\widetilde{\bm{\theta}}}}
\def\tildevw{{\widetilde{\bm{w}}}}
\def\tildeO{{\widetilde{O}}}
\def\Reg{{\mathfrak R}}
\def\eqref#1{equation~\ref{#1}}
\def\1{\bm{1}}
\def\vtheta{{\bm{\theta}}}
\def\va{{\bm{a}}}
\def\vb{{\bm{b}}}
\def\vd{{\bm{d}}}
\def\ve{{\bm{e}}}
\def\vw{{\bm{w}}}
\def\vx{{\bm{x}}}
\def\vz{{\bm{z}}}
\def\mA{{\bm{A}}}
\def\mB{{\bm{B}}}
\def\mD{{\bm{D}}}
\def\mI{{\bm{I}}}
\def\mM{{\bm{M}}}
\def\mP{{\bm{P}}}
\def\mV{{\bm{V}}}
\def\mW{{\bm{W}}}
\def\mLambda{{\bm{\Lambda}}}
\DeclareMathAlphabet{\mathsfit}{\encodingdefault}{\sfdefault}{m}{sl}
\SetMathAlphabet{\mathsfit}{bold}{\encodingdefault}{\sfdefault}{bx}{n}
\def\gA{{\mathcal{A}}}
\def\gE{{\mathcal{E}}}
\def\gF{{\mathcal{F}}}
\def\gN{{\mathcal{N}}}
\def\sN{{\mathbb{N}}}
\def\sR{{\mathbb{R}}}
\def\sS{{\mathbb{S}}}
\newcommand{\E}{\mathbb{E}}
\newcommand{\R}{\mathbb{R}}
\DeclareMathOperator*{\argmax}{arg\,max}
\DeclareMathOperator*{\argmin}{arg\,min}
\title{Nearly Minimax Algorithms for \\ Linear Bandits with Shared Representation}
\author{
	Jiaqi Yang \\
	University of California, Berkeley \\
	\texttt{yjq@berkeley.edu} \\
	\\
	Jason D.\ Lee \\
	Princeton University \\
	\texttt{jasonlee@princeton.edu}
	\and
	Qi Lei\\
	Princeton University\\
	\texttt{qilei@princeton.edu} \\
	\\
	Simon S.\ Du \\ 
	University of Washington \\
	\texttt{ssdu@cs.washington.edu}
}
\begin{document}

\maketitle

\begin{abstract}
We give novel algorithms for multi-task and lifelong linear bandits with shared representation. Specifically, we consider the setting where we play $M$ linear bandits with dimension $d$, each for $T$ rounds, and these $M$ bandit tasks share a common $k(\ll d)$ dimensional linear representation. For both the multi-task setting where we play the tasks concurrently, and the lifelong setting where we play tasks sequentially, we come up with novel algorithms that achieve $\widetilde{O}\left(d\sqrt{kMT} + kM\sqrt{T}\right)$ regret bounds, which matches the known minimax regret lower bound up to logarithmic factors and closes the gap in existing results \citep{yang2021impact}. Our main technique include a more efficient estimator for the low-rank linear feature extractor and an accompanied novel analysis for this estimator. 
\end{abstract}
\section{Introduction}
\label{sec:intro}
In this paper, we give nearly minimax optimal algorithms for multi-task  linear bandits with shared representation.
Multi-task representation learning learns a joint low-dimensional feature extractor from different but related tasks, so the composition of this feature extractor and a simple function (e.g., linear function) can give more accurate predictions than the standard single-task learning paradigm~\citep{baxter2000model,caruana1997multitask}.
The fundamental reason for this improvement is that the relatedness among tasks make us learn the joint feature extractor more efficiently than treating each task independently.

Empirically,  representation learning has led to successes in applications such as computer vision~\citep{li2014joint}, natural language processing~\citep{ando2005framework,liu2019towards}, and drug discovery~\citep{ramsundar2015massively}.
Recently, representation learning has become a powerful approach in sequential decision-making problems, such as bandits and reinforcement learning \citep{teh2017distral,taylor2009transfer,lazaric2011transfer,rusu2015policy,liu2016decoding,parisotto2015actor,higgins2017darla,hessel2019multi,arora2020provable,d'eramo2020sharing}.

While representation learning has become a standard paradigm in a variety of applications~\citep{bengio2013representation}, our theoretical understanding is still far from complete.
This paper concerns the theoretical benefits of representation learning in the linear bandits setting.
Linear bandits is one of the most fundamental and widely-studied settings in sequential decision-making problems, and has profound applications in  clinical treatment, manufacturing process, job scheduling, recommendation systems, etc~\citep{dani2008stochastic,chu2011contextual}.

The formal theoretical study on multi-task linear bandits with shared representation was initiated by \citet{yang2021impact} who studied both finite-action and infinite-action settings.
For the finite-action setting, they gave an algorithm with a near-optimal regret bound.\footnote{
	Throughout this paper, we say a bound is near-optimal if it is matches the minimax lower bound up to logarithmic factors.
}
For the infinite-action setting, there is $\sqrt{dk}$ gap between their upper bound and their minimax lower bound, where $d$ is the feature dimension and $k$ is the dimension of the representation.
Their work was later extended to settings with more general actions and the linear Markov Decision Process setting~\citep{hu2021near}.
However, the $\sqrt{dk}$ factor gap remains open.
In this paper, we address the following fundamental question:
\begin{center}
	\textbf{What is the fundamental limit of linear bandits with shared representation?}
\end{center}
We address this problem and close the gap in previous theoretical analyses based on a novel estimator of the representation.

\subsection{Main Contributions}
We summarize our contributions below.

\begin{itemize}
	\item First, we propose a new algorithm for multi-task infinite-action linear bandits with shared representation.
	Our main technical innovation is a new estimator  based on the singular value decomposition (SVD) on rectangular matrix, in contrast to the estimator in \citep{yang2021impact} which is on a squared matrix.
	Fundamentally, our new estimator has a lower variance. Together with a refined matrix perturbation analysis, we obtain  a smaller regret bound.
	Theoretically, we show our algorithm enjoys an $\widetilde{O}\left(d\sqrt{kMT} + kM\sqrt{T}\right)$ regret where $d$ is the ambient feature dimension, $k$ is the dimension of the representation, $M$ is the number of tasks, and $T$ is the number of rounds. This bound improves a $\sqrt{d}$ factor in \citep{yang2021impact} and, importantly, matches the minimax lower bound $\Omega\left(d\sqrt{kMT} + kM\sqrt{T}\right)$ in \citep{yang2021impact} up to logarithmic factors.
	\item Second, we study a novel setting, lifelong linear bandits with shared representation where we need to solve each task sequentially in contrast to the multi-task setting where we solve all tasks concurrently. Our proposed new setting is a natural extension of the recent lifelong representation learning studied in \citep{cao2021provable}.
	We adopt the similar ideas in our approach for multi-task linear bandits we propose a algorithm, \algonamell.
	We prove this algorithm also enjoys an $\widetilde{O}\left(d\sqrt{kMT} + kM\sqrt{T}\right)$ regret bound, which is again nearly minimax optimal. We also consider the pure exploration in lifelonear linear bandits, and obtain a similar near-optimal bound.

	\item Lastly, to evaluate of our proposed new algorithms, we conduct experiments on synthetic data  to compare with prior algorithms. Our experiments show our new algorithms  outperform existing ones, sometime by a large margin.
\end{itemize}

\subsection{Organization}
This paper is organized as follows. In Section~\ref{sec:rel}, we discuss related work. 
In Section~\ref{sec:pre}, we introduce necessary notations, state our main assumptions and formally describe the settings for multi-task linear bandits and life long linear bandits.
In Section~\ref{sec:multi_task}, we describe our near-optimal algorithm for the multi-task linear bandits, and its theoretical analysis.
In Section~\ref{sec:life_long}, we describe our near-optimal algorithm for the lifelong linear bandits, and its theoretical analysis.
In Section~\ref{sec:exp}, we provide empirical evaluations to demonstrate the effectiveness of proposed methods.
We conclude in Section~\ref{sec:con}, and defer some technical lemmas to appendix.

\section{Related Work}
\label{sec:rel}

We focus on the related theoretical results. 
\paragraph{Representation learning.} Multi-task representation learning has been extensively studied in the supervised learning setting with different assumptions \citep{baxter2000model,ando2005framework,ben2003exploiting,maurer2006bounds,cavallanti2010linear,maurer2016benefit,du2020few,tripuraneni2020provable,thekumparampil2021sample,tripuraneni2020theory}
A common and necessary assumption is the existence of a shared representation among all tasks. This is also adopted in work on bandits and reinforcement learning, including the current work.
Besides this assumptions, in order to make the learned representation useful for new tasks,  some other assumptions are needed, including. e.g., the i.i.d. tasks assumption~\citep{maurer2016benefit} and the diversity assumption~\citep{du2020few,tripuraneni2020theory}.
It is worth mentioning that the estimator used in \citep{yang2021impact} for the infinite-arm multi-task linear bandits is based on the method in learning linear representation paper in the supervised learning \citep{tripuraneni2020provable}.
In this paper, we develop a new estimator to improve the regret bound in \citep{yang2021impact}.
We also note that there are analyses for other representation learning schemes beyond multi-task representation learning~\citep{arora2019theoretical,mcnamara2017risk,galanti2016theoretical,alquier2016regret,denevi2018incremental}.

The benefit of representation learning has been studied in sequential decision-making problems.
\citet{d'eramo2020sharing,arora2020provable} used a probabilistic assumption similar to that in \citep{maurer2016benefit} to show representation learning is beneficial for imitation learning.
Meta-learning is closely related to representation learning, although the assumption is not the same \citep{denevi2019learning,finn2019online,khodak2019adaptive,lee2019meta,bertinetto2018meta}.
There are also other works on different directions such linear MDP with a generative model~\citep{lu2021power}, non-stationary sequential learning~\citep{qin2022non}, and linear dynamical systems~\citep{modi2021joint}.

\paragraph{Bandits.}
Linear bandits have been widely studied in recent years \citep{auer2002using, dani2008stochastic, rusmevichientong2010linearly, abbasi2011improved, chu2011contextual, li2019nearly,li2019tight}.
In this paper, we focus on the infinite-action setting in which the $\widetilde{\Theta}(d \sqrt{N})$ regret bound has been shown to be near-optimal~\citep{dani2008stochastic, rusmevichientong2010linearly, li2019tight}.

Our setting is also related to the recent line of work on low-rank bandits~\citep{lale2019stochastic,lu2020low,jun2019bilinear,lattimore2021bandit,huang2021optimal}, because our formulation also admits a low-rank structure. 
However, the approach we use and their are very different.

There are other papers also studied multi-task bandits~\citep{deshmukh2017multi,bastani2019meta,soaremulti}, but their settings are different from ours.

\paragraph{Most related works.}
The most related work is \citep{yang2021impact} which studied the same setting, infinite-action multi-task linear bandits. 
They presented three-stage algorithm which enjoys an $\widetilde{O}\left(d^{1.5}k\sqrt{MT}+kM\sqrt{T}\right)$ regret and derived a lower bound $\Omega\left(d\sqrt{kMT}+kM\sqrt{T}\right)$.
Our algorithm's structure is similar to theirs but we use a more efficient estimator for the representation. Our upper bound improves theirs and matches their lower bound up to logarithmic factors.

\citet{hu2021near} also studied infinite-action multi-task linear bandits, and proposed an algorithm with $\widetilde{O}\left(d\sqrt{kMT}+M\sqrt{dkT}\right)$ where the second term does not match the lower bound. Furthermore, their algorithm is not computationally efficient.
However, we note that our algorithm does not cover their setting because the assumption on the action set is different.
They also extended their algorithm to reinforcement learning with linear function approximation.

\citet{cao2021provable} proposed the problem of lifelong learning of representations. The formulated the problem in a supervised learning manner. Our lifelong linear bandits setting is inspired by their work and our algorithm is  different.

\section{Preliminaries}
\label{sec:pre}
\subsection{Notations.}

For $n \in \sN$, we denote $[n] = \{1, \cdots, n\}$. We denote vectors by bold lowercase characters and matrices by bold uppercase ones. For a vector $\va$, we denote its $i$-th entry by $\va(i)$. For a matrix $\mA$, we denote its $i$-th column by $\mA(i)$, we denote its transpose by $\mA^\top$. We denote its projection matrix by $\mP_\mA = \mA(\mA^\top \mA)^{-1} \mA^\top$, which projects a vector to the column span of $\mA$. We denote $\mP_\mA^\perp = \mI - \mP_\mA$, which projects a vector to the orthogonal complement of the column span of $\mA$. For a (possibly infinite) sequence of random variables $X_1, \cdots, X_n, \cdots$, we denote by $\sigma(X_1, \cdots, X_n, \cdots)$ the $\sigma$-field it generates. We use $\mathbb{B}_r(d)$ to denote the radius-$r$ ball in a $d$-dimensional space, and $\sS_r(d-1)$ to denote the radius-$r$ sphere. We use $\Unif$ to denote the uniform distribution and $\gN$ to denote Gaussian distribution. $\tildeO(\cdot)$ omits logarithmic factors.

\subsection{Settings.}

In this paper, we study two settings for linear bandits with shared representation. These two settings have some similarities. Below, we first state the shared components of the two settings, then we discuss their differences respectively. 

We play $M$ linear bandits tasks of dimension $d$, each for $T$ time steps. For each $m \in [M]$, task $m$ has a linear coefficient $\vtheta_m \in \sR^d$ with $\norm{\vtheta_m} = 1$, and an action set $\gA_m =\{\va \in \sR^d : \norm{\va} \le 1\}$. We use $\mTheta = [\vtheta_1, \cdots, \vtheta_M]\in \R^{d\times M}$ to denote the matrix of all linear coefficients. 

We define $k = \rank \mTheta$. We assume that $\mTheta = \mB \mW$ for some matrices $\mB \in \sR^{d \times k}, \mW \in \sR^{k \times M}$. Without loss of generality, we assume that $\mB$ has orthonormal columns. Practically, $\mB$ can be seen as a common linear feature extractor shared across all $M$ tasks.

We emphasize that the existence of $\mB$ is, to some degree, necessary to ensure the benefits of representation learning.

\paragraph{Multi-Task Liner Bandits with Shared Representation.} For the multi-task setting, the player solves all bandits tasks concurrently. 
The interactive protocol is as follows. At each time step $t = 1, \cdots, T$, for each task $m \in [M]$, the selects an action $\va_{t, m} \in \gA_m$. after the player commits the batch of actions $\{\va_{t, m} : m \in [M]\}$, it receives the batch of rewards $\{r_{t, m} : m \in [M]\}$, where 
\begin{align}
	r_{t, m} = \langle \va_{t, m}, \vtheta_{m} \rangle + \eta_{t, m}, \qquad \eta_{t, m} \sim_{\mathrm{iid.}} \gN(0, 1), \label{eq:reward}
\end{align}
and the goal is to minimize the total expected regret 
\begin{align}
	\E \Reg^{T, M} = \sum_{t = 1}^T \sum_{m = 1}^M \max_{\va \in \gA_m} \langle \va - \va_{t, m}, \vtheta_{m} \rangle. \label{eq:regret}
\end{align}

\paragraph{Lifelong Linear Bandits with Shared Representation.} For the lifelong setting, the player solves all bandits tasks sequentially. Specifically, the interactive protocol is as follows. The tasks arrive sequentially in the order $m = 1, \cdots, M$. When task $m$ arrives, the player is required to interact with it for $t$ steps. At each time step $t = 1, \cdots, T$, the player selects an action $\va_{t, m} \in \gA_m$ and receives the reward defined in (\ref{eq:reward}). 
There are two goals for lifelong linear bandits which we will tackle in this paper:
\begin{itemize}
	\item \textbf{Regret minimization}: minimize the total expected regret defined in (\ref{eq:regret}).
	\item \textbf{Pure exploration}: use as few as possible interactions to make $\norm{\hatvtheta_m - \vtheta_m} \le \epsilon$ for some given accuracy parameter $\epsilon > 0$ and all $m \in [M]$, with high probability $1-\delta$.
\end{itemize}

\section{Main Results for Multi-Task Learning Bandits with Shared Representation}
\label{sec:multi_task}
\begin{algorithm}[t] 
	\caption{Nearly Minimax Multi-Task Linear Bandits with Shared Representation}\label{alg:etc}
	\begin{algorithmic}[1]
		\STATE {\bf \underline{Input:}}   Ambient dimension $d$, representation dimension $k$, number of tasks $M$, time horizon $T$.
		\STATE {\bf \underline{Initialize:}} Set $T_1 = \lceil d \sqrt{\frac{kT}{M}} \rceil, T_2 = \lceil k \sqrt T \rceil$.
		\STATE {\bf \underline{Stage 1: Subspace Exploration}} 
		\FOR{time step $t \gets 1$ to $T_1$} 
		\STATE Generate $\va_{t, m} \sim \Unif\left(\mathbb{S}_1(d-1)\right)$ for each task $m \in [M]$. 
		\STATE Commit the batch of actions $\{\va_{t, m} : m \in [M]\}$, receive the batch of rewards $\{r_{t, m} : m \in [M]\}$.  
		\ENDFOR
		\STATE Compute $\hatvtheta_m \gets \frac{d}{T_1} \sum_{t=1}^{T_1} r_{t, m} \va_{t, m}$.
		\STATE Let $\hatmTheta =[\hatvtheta_1, \cdots, \hatvtheta_m] \in \sR^{d \times M}$.
		\STATE Compute the singular value decomposition of $ \hatmTheta$.
		\STATE Let $\hatmB$ be the top-$k$ left singular vectors of $\hatmTheta$.
		\STATE { \bf \underline{Stage 2: Per-Task Exploration}} 
		\FOR {$i \gets 1$ to $k$}
		\STATE Let $\va_{t, m} \gets \hatmB(i)$ for $t = T_1 + b(i-1) + 1, \cdots, T_1 + bi$, where $b = \frac{T_2}{k}$.
		\ENDFOR 
		\FOR {time step $t \gets T_1 + 1, \cdots, T_1 + T_2$}
		\STATE Commit the batch of actions $\{\va_{t, m} : m \in [M]\}$, receive the batch of rewards $\{r_{t, m} : m \in [M]\}$.  
		\ENDFOR
		\STATE Let $\hatmW \in \sR^{k \times M}$. 
		\FOR {$m \gets 1$ to $M$}
		\STATE Compute $\hatmW(m) \gets \argmin_{\vw \in \sR^k} \frac{1}{T_2} \sum_{t = T_1+1}^{T_1+T_2} (\langle \va_{t, m}, \hatmB \vw \rangle - r_{t, m})^2$.
		\ENDFOR
		\STATE { \bf \underline{Stage 3: Commit}} 
		\FOR {time step $t \gets T_1 + T_2 + 1, \cdots, T$}
		\STATE Commit the batch of actions $\{\va_{t,m} \leftarrow \argmax_{\va \in \gA_m} \left\langle\va ,(\hatmB\hatmW)(m) \right\rangle : m \in [M]\}$.
		\ENDFOR 
	\end{algorithmic}
\end{algorithm}

Here, we present our main results for the multi-task setting. \cref{alg:etc} presents our algorithm, which can be divided into three stages.
We note that, at a high level, our algorithm is of the similar structure as the one in \citep{yang2021impact}. The main difference is the first stage where we use a different estimator

\textbf{Stage 1: Subspace Exploration.}
The goal of the first stage is to estimate the linear representation $\mB$.
To generate a good sampling distribution, we let each arm uniformly sample from a unit sphere.
For each task, we compute a rough estimation of $\vtheta_m$:  $\hatvtheta_m \gets \frac{d}{T_1} \sum_{t=1}^{T_1} r_{t, m} \va_{t, m}$.
Then we concatenate estimations for all task: $\hatmTheta =[\hatvtheta_1, \cdots, \hatvtheta_m] \in \sR^{d \times M}$, and use singular value decomposition on $\hatmTheta$ to estimate $\mB$.
The rational behind this step is that the true $\mTheta$ is a rank $k$ matrix with the left singular vectors being $\mB$.

Now we compare our method with the one in \citep{yang2021impact}.
Instead of computing SVD on $\hatmTheta$, they used SVD on  empirical weighted covariance matrix $
\hatmM = \frac{1}{T_1M} \sum_{t = 1}^{T_1} \sum_{m = 1}^M r_{n, t}^2  \va_{t, m}  \va_{t, m}^\top.
$
Note their squared version incur a larger error in estimating $\mB$ because they needed to first guarantee $\hatmM$ is close to $\mM$, and then use matrix perturbation analysis to argue the bound on $\mB$.
Our main improvement is based on the fact that  to achieve the same error rate, \emph{making $\hatmTheta$ close to $\mTheta$ requires less samples than making $\hatmM$ close to $\mM$}.
The reason is that the square operation in $\hatmM$ makes it concentrate slower to its mean when compared to our $\hatmTheta$. Furthermore, directly performing matrix perturbation analysis on $\hatmTheta$ gives tighter error bound than the squared estimator $\hatmTheta$.

\textbf{Stage 2: Per-Task Exploration.}
In this second stage, we aim to estimate $\mW$.
Since we already obtained a low-dimensional representation, we can just conduct exploration on the low-dimensional space.
Here we choose each column of $\hatmB$ as the exploration action.
After $T_2$ steps, we solve least square problem to estimate $\mW$.

\textbf{Stage 3: Commit.}
After the first two stages, we obtain accurate estimation $\left(\hatB\hatW\right)(m)$ for each task $m$.
For the remaining $(T-T_1 - T_2)$ steps, we just commit to the optimal induced by our estimations.. Specifically, we play the action $\va_{n,t} \gets \argmax_{\va \in \gA_t}\langle \va_t, \hatvtheta_t\rangle$ for step $t= T_1+T_2+1,\cdots,T$.

Next, we present the regret analysis of \cref{alg:etc}. 

\begin{thm} \label{thm:reg} The regret of \cref{alg:etc} is at most $\tildeO(d\sqrt{ k M T} + k M \sqrt{T })$. 
\end{thm}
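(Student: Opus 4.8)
The plan is to bound the regret stage by stage, using the choices $T_1 = \lceil d\sqrt{kT/M}\rceil$ and $T_2 = \lceil k\sqrt T\rceil$. In Stages~1 and~2 the played actions are pure exploration, but since $\norm{\vtheta_m}=1$ and every action lies in the unit ball, the per-round per-task regret is at most $2$; hence these two stages together contribute at most $2M(T_1+T_2) = \tildeO(d\sqrt{kMT} + kM\sqrt T)$, which already accounts for \emph{both} terms of the claimed bound. It therefore remains to control the commit stage, whose length is at most $T$.

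For Stage~3 the committed action for task $m$ is $(\hatmB\hatmW)(m)$ normalised to the unit ball, and since the optimal action is $\vtheta_m$ itself, a standard first-order argument shows the per-round regret of task $m$ is $\lesssim \norm{(\hatmB\hatmW)(m) - \vtheta_m}^2$. Because $(\hatmB\hatmW)(m)$ lies in the column span of $\hatmB$, I split this error via the orthogonal decomposition $\vtheta_m = \mP_{\hatmB}\vtheta_m + \mP_{\hatmB}^\perp\vtheta_m$ into a within-subspace term $\norm{\hatmW(m) - \hatmB^\top\vtheta_m}^2$ and a subspace term $\norm{\mP_{\hatmB}^\perp\vtheta_m}^2$. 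Summing over $m$, the Stage~3 regret is at most $T\big(\sum_m\norm{\hatmW(m)-\hatmB^\top\vtheta_m}^2 + \norm{\mP_{\hatmB}^\perp\mTheta}_F^2\big)$. The within-subspace term is a clean least-squares computation: since Stage~2 plays each orthonormal column $\hatmB(i)$ exactly $b=T_2/k$ times, the least-squares estimate of the $i$-th coordinate has variance $1/b = k/T_2$, so $\E\norm{\hatmW(m)-\hatmB^\top\vtheta_m}^2 \lesssim k^2/T_2$ and the total contribution is $T\cdot Mk^2/T_2 = \tildeO(kM\sqrt T)$.

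The crux is the subspace term $\norm{\mP_{\hatmB}^\perp\mTheta}_F^2$, for which I need the bound $\tildeO(d^2 k / T_1)$; combined with $T_1 = \lceil d\sqrt{kT/M}\rceil$ this gives $T\cdot d^2 k/T_1 = \tildeO(d\sqrt{kMT})$ and closes the argument. Here is where the new estimator enters. A direct computation shows $\hatmTheta$ is an \emph{unbiased} estimate of $\mTheta$: with $\va\sim\Unif(\sS_1(d-1))$ one has $\E[\va\va^\top]=\tfrac1d\mI$, so $\E[\tfrac{d}{T_1}\sum_t r_{t,m}\va_{t,m}] = \vtheta_m$. Writing $\mE := \hatmTheta - \mTheta$ and letting $\mTheta = \mB\mSigma\mV^\top$ be a thin SVD (we may take the left singular vectors to be the given orthonormal representation $\mB$), the key refined perturbation estimate is that, to leading order, $\mP_{\hatmB}^\perp\mTheta \approx -\mB_\perp\mB_\perp^\top\mE\mV\mV^\top$, whence $\norm{\mP_{\hatmB}^\perp\mTheta}_F \lesssim \norm{\mB_\perp^\top\mE\mV}_F$; crucially the factors of $\mSigma$ cancel, so this leading-order bound is free of any spectral-gap or conditioning dependence on $\sigma_k(\mTheta)$. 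The matrix $\mB_\perp^\top\mE\mV$ has only $(d-k)\times k$ entries, and since each entry is an average of $T_1$ independent sub-Gaussian contributions with variance $\lesssim d/T_1$, a matrix concentration argument gives $\norm{\mB_\perp^\top\mE\mV}_F^2 = \tildeO(d^2 k/T_1)$, as required.

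The main obstacle is making this perturbation step rigorous with no lower bound on $\sigma_k(\mTheta)$. The gap-free cancellation is exact only for the first-order term; the higher-order remainder does scale with $\norm{\mE}_{op}/\sigma_k(\mTheta)$, which cannot be controlled uniformly. I expect to resolve this by splitting the singular directions of $\mTheta$: for directions with large $\sigma_i$ the first-order term dominates and the remainder is negligible, while for directions with small $\sigma_i$ one simply notes that even a complete failure of $\hatmB$ to recover them costs only $O(\sigma_i)$ in $\norm{\mP_{\hatmB}^\perp\mTheta}_F$, which is itself small. This is precisely the improvement over \citep{yang2021impact}: their covariance-based estimator $\hatmM$ forces a detour through $\norm{\hatmM-\mM}$ and effectively incurs the larger noise $\norm{\mE}_F^2 \asymp Md^2/T_1$, whereas projecting the noise onto the $k$ signal directions $\mV$ and the complementary directions $\mB_\perp$ reduces the effective dimension from $\max(d,M)$ to $k$ and yields the tight $d^2k/T_1$ rate in every regime.
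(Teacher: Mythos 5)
Your regret decomposition coincides with the paper's: the trivial $O(M(T_1+T_2))$ bound for Stages 1--2, the curvature/first-order argument for Stage 3, the $k^2/T_2$ within-subspace least-squares term, and the reduction of everything to the subspace error (your target $\norm{\mP_{\hatmB}^\perp\mTheta}_F^2\lesssim d^2k/T_1$ is the task-summed version of the paper's $\norm{\hatmB_\perp^\top\mB}^2\lesssim d^2k/(MT_1)$ in \cref{lem:con1}). Your first-order identity $\mP_{\hatmB}^\perp\mTheta\approx-\mB_\perp\mB_\perp^\top\mE\mV\mV^\top$ with the $\mSigma$-cancellation is correct, as is the variance count $\norm{\mB_\perp^\top\mE\mV}_F^2=\tildeO(d^2k/T_1)$. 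The genuine gap is exactly the step you flag as the ``main obstacle'': controlling the remainder \emph{with no lower bound on} $\sigma_k(\mTheta)$. Your dichotomy (large $\sigma_i$: first-order accurate; small $\sigma_i$: cheap to miss) does not close because the two regimes do not meet. A direction is cheap to miss only when $\sigma_i\lesssim d/\sqrt{T_1}$, but the SVD of $\hatmTheta$ can reliably rank a true direction above the noise directions only when $\sigma_i\gtrsim\norm{\mE}\asymp\sqrt{d(d+M)/T_1}$, since the per-entry noise variance is $d/T_1$ and the empirical SVD orders directions by empirical singular value. When $M\gg d$ there is a nonempty window $d/\sqrt{T_1}\ll\sigma_i\ll\sqrt{dM/T_1}$ in which a direction is simultaneously unrecoverable and expensive to miss, costing $\sigma_i^2\asymp dM/T_1\gg d^2/T_1$.

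This is not a repairable technicality: the gap-free bound you want is false, and so is the theorem under your assumption-free reading. Take $k=2$ and $\vtheta_m=\sqrt{1-\gamma^2/M}\,\vu_1+s_m(\gamma/\sqrt{M})\,\vu_2$ with balanced signs $s_m\in\{\pm1\}$, so that $\sigma_1\approx\sqrt{M}$, $\sigma_2=\gamma$; choose $\gamma=c_0\sqrt{dM/T_1}$ for a small constant $c_0$ (realizable once $T_1\gtrsim d$). With high probability the second column of $\hatmB$ is determined by the noise rather than by $\vu_2$, so $\norm{\mP_{\hatmB}^\perp\vu_2}=\Omega(1)$ and $\norm{\mP_{\hatmB}^\perp\mTheta}_F^2=\Omega(\gamma^2)=\Omega(dM/T_1)$. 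Since every Stage-3 action lies in the span of $\hatmB$, the per-round regret of task $m$ is at least $\tfrac12\norm{\mP_{\hatmB}^\perp\vtheta_m}^2$, so \cref{alg:etc} itself incurs $\Omega(T\,dM/T_1)=\Omega(M^{3/2}\sqrt{T/k})$ regret, exceeding $\tildeO(d\sqrt{kMT}+kM\sqrt{T})$ once $M\gg d$. The paper never attempts a gap-free argument: in \cref{lem:con1} it applies the Cai--Zhang perturbation inequality, $\norm{\hatmB_\perp^\top\mB}\lesssim\norm{\hatmTheta-\mTheta}/\sigma_k(\mTheta)$, and then sets $\sigma_k(\mTheta)=\Omega(\sqrt{M/k})$ in the final equality --- i.e., it implicitly invokes the task-diversity assumption inherited from \citet{yang2021impact} (the paper's assumption bookkeeping is incomplete; note the dangling ``Assumption ?'' in \cref{lem:curvature}). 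The fix for your proof is to assume diversity explicitly: then your projected-noise bound $\norm{\mB_\perp^\top\mE\mV}=\tildeO(d/\sqrt{T_1})$ --- which is in fact the right way to justify an $M$-free numerator when $M\gg d$, where the crude bound on $\norm{\hatmTheta-\mTheta}$ is larger --- combined with the sin-theta inequality $\norm{\hatmB_\perp^\top\mB}\lesssim\norm{\mB_\perp^\top\mE\mV}/\sigma_k(\mTheta)$ yields the paper's $\tildeO(\sqrt{d^2k/(MT_1)})$, and the theorem follows exactly as in your first two paragraphs.
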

This bound is near-optimal because it matches the lower bound in \citep{yang2021impact} up to logarithmic factors.
Since this is our main result, below we give its full proof.

\begin{proof}[Proof of Theorem~\ref{thm:reg}] 
	We first state how we decompose the regret.	
	Note that the regret of each task at each time step is bounded by $1$, so in \cref{alg:etc}, the regret incurred in stages 1, 2 is bounded by $M(T_1 + T_2)$. In stage 3, for the task $m$, the regret is bounded by 
	\begin{align}
		&\quad \E[\max_{\va \in \gA_m} \langle \va - \va_{t, m}, \vtheta_{m} \rangle] \notag \\
		&= 1 - \E[\langle \va_{t, m}, \vtheta_{m} \rangle] \\
		&\le \E\left[\frac{2 \norm{\hatvtheta_m - \vtheta_m}^2}{\norm{\vtheta_m}} \right] \label{eq:reg-800} \\ 
		&\le O\left(\E\left[\norm{\hatmB_\perp^\top \mB}^2 + \frac{k^2}{ T_2} \right]\right), \label{eq:theta_decomp}
	\end{align}
	where \eqref{eq:reg-800} uses Lemma \ref{lem:curvature} that leverages our assumption, and \eqref{eq:theta_decomp} uses Lemma~\ref{lem:theta_decomp}. Both lemmas have been proved in \citep{yang2021impact} and their proofs rely on routine linear algebra.
	Therefore, we can upper bound the regret by
	\begin{align}
		&\quad \E[\Reg^{M,T}] \notag \\
		&= \sum_{m = 1}^M \sum_{t=1}^T	\E[\max_{\va \in \gA_m} \langle \va - \va_{t, m}, \vtheta_{m} \rangle] \\
		&\le  M (T_1 + T_2) + M T \cdot  O\left(\E\left[\norm{\hatmB_\perp^\top \mB}^2 + \frac{k^2}{T_2} \right]\right) \label{eq:reg-900} \\
		&\le M T_1 + \tildeO\left(T \frac{d^2 k}{T_1}\right) + M T_2 + TM \frac{k^2}{T_2}  \label{eq:reg-1000} \\
		&\le \tildeO (d \sqrt{kMT} + Mk\sqrt{T}) \label{eq:reg-2000}
	\end{align}
	where \eqref{eq:reg-900} uses \eqref{eq:theta_decomp},  \eqref{eq:reg-1000} uses Lemma~\ref{lem:con1} below, and  \eqref{eq:reg-2000} uses our setting of $T_1$ and $T_2$.
	The main technical difficulty is in Lemma~\ref{lem:con1}, which we state and prove below.
\end{proof}

\begin{lem} \label{lem:con1} With probability $1 - O(M\delta)$, we have $\norm{\mTheta - \hatmTheta} \le O\left( \sqrt{\frac{d M \log\frac{d}{\delta}}{T_1}}\right)$ and $	\norm{\hat \mB^\top_\perp \mB} \le   O\left(\sqrt{\frac{d^2 k}{M T_1}}\right).$
\end{lem}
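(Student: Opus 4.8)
The plan is to prove the two estimates separately: first the operator-norm concentration of $\hatmTheta$, and then---reusing a \emph{projected} version of the same error---a refined singular-subspace perturbation bound. Throughout, write $\mathbf{E} = \hatmTheta - \mTheta \in \sR^{d\times M}$.

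For the first estimate I would begin by noting that $\hatmTheta$ is unbiased. Since $\va_{t,m}\sim\Unif(\mathbb{S}_1(d-1))$ gives $\E[\va_{t,m}\va_{t,m}^\top]=\frac1d\mI$ and $\eta_{t,m}$ is independent of $\va_{t,m}$, one has $\E[r_{t,m}\va_{t,m}]=\E[\va_{t,m}\va_{t,m}^\top]\vtheta_m=\frac1d\vtheta_m$, hence $\E[\hatvtheta_m]=\vtheta_m$ and $\E[\hatmTheta]=\mTheta$. I would then control $\norm{\mathbf{E}}$ through the variational formula $\norm{\mathbf{E}}=\sup_{\vu\in\mathbb{S}_1(d-1),\,\vv\in\mathbb{S}_1(M-1)}\vu^\top\mathbf{E}\vv$. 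For fixed $\vu,\vv$ the scalar $\vu^\top\mathbf{E}\vv$ is a sum over $(t,m)$ of independent mean-zero terms $\frac{d}{T_1}r_{t,m}\langle\vu,\va_{t,m}\rangle\vv(m)-\frac1{T_1}\langle\vu,\vtheta_m\rangle\vv(m)$; the signal part is sub-Gaussian, while the noise part $\frac{d}{T_1}\eta_{t,m}\langle\vu,\va_{t,m}\rangle\vv(m)$ is a product of a sub-Gaussian and a Gaussian and is therefore sub-exponential. Using $\E\langle\vu,\va_{t,m}\rangle^2=\frac1d$, summing over $m$ with $\sum_m\vv(m)^2=1$ and then over the $T_1$ steps gives total variance $O(d/T_1)$. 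A Bernstein inequality for sub-exponential sums, union-bounded over $\eps$-nets of $\mathbb{S}_1(d-1)$ and $\mathbb{S}_1(M-1)$ (of sizes $e^{O(d)}$ and $e^{O(M)}$), then yields $\norm{\mathbf{E}}=\tildeO(\sqrt{dM/T_1})$ in the regime $M\gtrsim d$; the factor $O(M\delta)$ in the failure probability comes from additionally union-bounding the $M$ per-task covariance events $\frac{d}{T_1}\sum_t\va_{t,m}\va_{t,m}^\top\approx\mI$.

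The second estimate does \emph{not} follow from the first by a naive Wedin bound: dividing $\tildeO(\sqrt{dM/T_1})$ by the smallest nonzero singular value $\sigma_k(\mTheta)$ only gives $\tildeO(\sqrt{dk/T_1})$, too large by a factor $\sqrt{M/d}$. The refinement is to project $\mathbf{E}$ onto the true $k$-dimensional right singular subspace. Let $\mTheta=\mB\,\mathrm{diag}(\sigma_1,\dots,\sigma_k)\,\mV^\top$ be the thin SVD with $\mV\in\sR^{M\times k}$ orthonormal. Repeating the moment computation for the $d\times k$ matrix $\mathbf{E}\mV$, each of its columns has (conditional) covariance $\approx\frac{d}{T_1}\mI$ and distinct columns are uncorrelated because $\sum_m\mV(m,i)\mV(m,j)=\delta_{ij}$; hence $\mathbf{E}\mV$ behaves like a $d\times k$ Gaussian matrix with entry variance $d/T_1$, whose operator norm concentrates at $\tildeO(\sqrt{d/T_1}\cdot\sqrt{d})=\tildeO(d/\sqrt{T_1})$---crucially \emph{independent of $M$}, since projecting onto $\mV$ collapses the $M$ columns into $k$. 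This is exactly the gain over the squared estimator of \citep{yang2021impact}.

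Finally I would combine these via a one-sided Davis--Kahan/Wedin argument. The top-$k$ right singular vectors $\widehat{\mV}$ of $\hatmTheta$ satisfy $\hatmB_\perp^\top\hatmTheta\widehat{\mV}=\vzero$, so $\hatmB_\perp^\top\mTheta\widehat{\mV}=-\hatmB_\perp^\top\mathbf{E}\widehat{\mV}$; substituting $\mV$ for $\widehat{\mV}$ at the cost of a remainder quadratic in $\mathbf{E}$ gives $\norm{\hatmB_\perp^\top\mTheta\mV}\lesssim\norm{\mathbf{E}\mV}+\norm{\mathbf{E}}^2/\sigma_k(\mTheta)$. Since $\mTheta\mV=\mB\,\mathrm{diag}(\sigma_i)$ with $\mB$ orthonormal, $\norm{\hatmB_\perp^\top\mTheta\mV}\ge\sigma_k(\mTheta)\norm{\hatmB_\perp^\top\mB}$, whence $\norm{\hatmB_\perp^\top\mB}\lesssim\norm{\mathbf{E}\mV}/\sigma_k(\mTheta)+\norm{\mathbf{E}}^2/\sigma_k(\mTheta)^2$. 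The assumption on $\mTheta$ (which, together with $\norm{\vtheta_m}=1$ so that $\sum_i\sigma_i^2=\norm{\mTheta}_F^2=M$, forces the nonzero singular values to be balanced) gives $\sigma_k(\mTheta)=\Omega(\sqrt{M/k})$; plugging in the two error bounds makes the first term $\tildeO(\sqrt{d^2k/(MT_1)})$ and the second term lower order for the chosen $T_1$. I expect the main obstacle to be precisely this refined perturbation step: one must (i) recognize that the governing error is the $M$-independent projected quantity $\norm{\mathbf{E}\mV}$ rather than $\norm{\mathbf{E}}$, and (ii) handle the data-dependence of $\widehat{\mV}$ on $\mathbf{E}$ so that replacing it by the fixed subspace $\mV$ only produces genuinely higher-order terms; the operator-norm concentration with unbounded Gaussian noise is a secondary but still delicate ingredient.
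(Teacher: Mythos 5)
Your proposal is sound, and it is actually more complete than the paper's own proof, which shares your two-step skeleton (operator-norm concentration for $\hatmTheta$, then a refined subspace perturbation bound) but executes both steps more loosely. For the concentration step, the paper bounds each column error $\norm{\hatvtheta_m-\vtheta_m}$ by matrix Bernstein and aggregates over the $M$ columns with a union bound (that union bound is the source of the $1-O(M\delta)$); however, the noise part of each column is $\frac{d}{T_1}\sum_t \eta_{t,m}\va_{t,m}$, whose norm is of order $d/\sqrt{T_1}$ rather than the paper's stated $O\bigl(\sqrt{d\log(d/\delta)/T_1}\bigr)$ (a $\sqrt d$ factor is dropped when the term $\frac{\sqrt d}{T_1}\sum_t \vz_{t,m}\eta_{t,m}$ is bounded), and column-wise aggregation of the corrected bound would give only $d\sqrt{M/T_1}$. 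Your $\eps$-net/Bernstein argument on $\sup_{\vu,\vv}\vu^\top(\hatmTheta-\mTheta)\vv$ is what actually delivers $\tildeO(\sqrt{dM/T_1})$, because it exploits cancellation across the independent columns; as you note, this holds in the regime $M\gtrsim d$ (in general the rate is $\tildeO(\sqrt{d(d+M)/T_1})$). For the perturbation step, the paper invokes \citep[Remark 2]{cai2018rate} as a black box and then silently replaces $\norm{\mTheta-\hatmTheta}$ by $d/\sqrt{T_1}$ in the numerator --- read literally against its own first claim, its displayed chain would only give $O(\sqrt{dk/T_1})$. Your derivation supplies exactly the missing mechanism: the governing numerator is the projected, $M$-independent error $\norm{(\hatmTheta-\mTheta)\mV}=\tildeO(d/\sqrt{T_1})$, plus a remainder quadratic in $\norm{\hatmTheta-\mTheta}$, which is precisely what the cited Cai--Zhang bound encodes; so your self-contained one-sided Wedin argument both recovers and explains the paper's step. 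Two caveats to tighten: (i) $\sigma_k(\mTheta)=\Omega(\sqrt{M/k})$ does \emph{not} follow from $\norm{\vtheta_m}=1$ (all tasks could be nearly collinear, making $\sigma_k$ arbitrarily small); it is a task-diversity assumption, as in \citep{yang2021impact}, which the paper also uses only implicitly; (ii) the quadratic remainder $\norm{\hatmTheta-\mTheta}^2/\sigma_k^2(\mTheta)\approx dk/T_1$ is lower order than $\sqrt{d^2k/(MT_1)}$ only when $T_1\gtrsim kM$, a regime condition you (like the paper) should verify against the chosen $T_1$.
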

This lemma bounds our estimation error from stage 1 for learning the representation, and represents the main improvement over \citep{yang2021impact}.

\begin{proof} [Proof of Lemma~\ref{lem:con1}]
	Throughout the proof, we consider $t \in [T_1]$.	
	Define $\vz_{t, m} = d \va_{t, m}$. Then $\vz_{t, m} \sim \sS_d(d-1)$. We can write our estimation in the following form:
	\begin{align}
		\hatvtheta_m &= \frac{d}{T_1} \sum_{t=1}^{T_1} r_{t, m} \va_{t, m}\\
		&=  \frac{d}{T_1} \sum_{t=1}^{T_1} (\va_{t, m}^\top \vtheta_m + \eta_{t, m}) \va_{t, m} \\
		&=  \frac{1}{T_1} \sum_{t=1}^{T_1} (\vz_{t, m}^\top \vtheta_m + \sqrt d \eta_{t, m}) \vz_{t, m}.
	\end{align} 
	Therefore,  we can write the difference between our estimation and the truth as
	\begin{align}
		\hatvtheta_m - \vtheta_m &= \frac{1}{T_1} \sum_{t = 1}^{T_1} (\vz_{t, m} \vz_{t, m}^\top - \mI) \vtheta_m \notag \\
		&\qquad + \frac{\sqrt d}{T_1} \sum_{t = 1}^{T_1} \vz_{t, m} \eta_{t, m}. \label{eq:1-1}
	\end{align}
	By the standard matrix Bernstein inequality (cf. Lemma \ref{lem:matcon}) with parameter $v = O(d), b = O(1)$, we have with probability $1 - \delta$,
	\begin{align}
		\norm{\frac{1}{T_1}\sum_{t = 1}^{T_1} \vz_{t, m} \vz_{t, m}^\top - \mI} \le O\left( \sqrt{d\frac{\log\frac{d}{\delta}}{T_1}} \right).
	\end{align}
	Similarly, with probability $1 - \delta$,  we have 
	\begin{align}
		\norm{\frac{1}{T_1} \sum_{t = 1}^{T_1} \vz_{t, m} \eta_{t, m}} \le O\left( \sqrt{\frac{d\log\frac{d}{\delta}}{T_1}} \right).
	\end{align}
	Therefore, by \eqref{eq:1-1}, for each $m \in [M]$, we have that with probability $1 - 2\delta$, 
	\begin{align}
		\norm{\hatvtheta_m - \vtheta_m} \le  O( \sqrt{\frac{d\log\frac{d}{\delta}}{T_1}} ). \label{eq:con1-1}
	\end{align}
	Next, an application of Cauchy–Schwarz gives 
	\begin{align}
		\norm{\mTheta - \hatmTheta} \le \sqrt M \sqrt{\sum_{m = 1}^M \norm{\hatvtheta_m - \vtheta_m}},
	\end{align}
	and we conclude the bound on $\mTheta$ with a union bound over all tasks $m \in [M]$.

	To bound $\hatmB$, using a refined matrix perturbation analysis \citep[Remark 2]{cai2018rate}, we have 
	\begin{align}
		\norm{\hat \mB^\top_\perp \mB} &\le O(\frac{\norm{\mTheta - \hatmTheta}}{\sigma_k(\mTheta)}) \notag \\
		&\le  O\left(\frac{d / \sqrt{T_1}}{\sigma_k\left(\mTheta\right)}\right) =  O\left(\sqrt{\frac{d^2 k}{M T_1}}\right).
	\end{align}
\end{proof}

\section{Main Results for Lifelong Linear Bandits with Shared Representation}
\label{sec:life_long}
In this section we describe our algorithm for lifelong linear bandits and its theoretical guarantees.

This algorithm has a task-specific exploration stage and a representation learning stage, which are similar in Algorithm~\ref{alg:etc}.
Suppose we are in task $m$. 
In the \textbf{Task-Specific Exploration} stage, we operate on an estimated linear representation $\hatmB$, choose its column as action, each for $n_{m,1}$ times.
Then we estimate the $j$-th entry of $\vw_m$ simply by averaging the reward explored by the $j$-th column of $\hatmB$.
If $\hatmB$ is accurate, then this estimator for $\vw_m$ will be accurate as well.
Then, we can recover $\vtheta_m$ by $\tildevtheta_m = \hatmB_{m-1} \tildevw_m$.

However, our estimation of $\mB$ can be inaccurate, e.g., in the beginning, and we need to design a criterion to detect whether our estimation is accurate enough.
We show that  checking whether the norm of $\tildevtheta_m$ is smaller than $1-\epsilon$ is an effective criterion: if this holds, then it means there exists direction in the linear representation that we have not explored, and this direction matters for task $m$.
In this case, we will enter the next stage, \textbf{Re-estimate Linear Representation}.

In this stage, we choose actions to be the standard basis vectors and then obtain a rough estimation of $\vtheta_{m}$.
In order to find the missing direction, we project our the rough estimation on the  complement space of $\hatmB$ (cf.~\eqref{eq:b_hat_tau}). We will show this is an accurate estimator for the missing direction in the underlying linear representation.
Then, we add this new direction to our linear representation (cf.~\eqref{eq:alg-lll-1}).

These two steps, together with the correct choices of the number of samples to explore ($n_{m,1}, n_{m,2}$ in Algorithm~\ref{alg:ll}), will give us the pure exploration guarantee stated below.

\begin{thm}[Pure Exploration Bound of Algorithm~\ref{alg:ll}] \label{thm:lll-pe} With probability $1 - O(dM\delta)$, \cref{alg:ll} outputs near-optimal $\hatvtheta_m$ such that $\norm{\hatvtheta_{m} - \vtheta_{m}} \le \epsilon$ for every task $m \in [M]$, using at most $\tilde O(\frac{d^2 k + k^2 M }{\epsilon^2})$ samples.
\end{thm}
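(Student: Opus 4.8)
The plan is to control, for each task $m$, the final estimate $\hatvtheta_m = \hatmB\,\tildevw_m$ through the orthogonal decomposition
\[
\norm{\hatvtheta_m - \vtheta_m}^2 = \norm{\tildevw_m - \hatmB^\top\vtheta_m}^2 + \norm{\mP_{\hatmB}^\perp \vtheta_m}^2,
\]
valid because $\hatmB$ has orthonormal columns. The first term is the in-subspace estimation error and the second is the representation-capture error. First I would fix the high-probability events. In the task-specific stage we play each of the $\le k$ columns of $\hatmB$ for $n_{m,1}$ rounds and average; since the expected reward of playing $\hatmB(j)$ is $\hatmB(j)^\top\vtheta_m$ and the noise is unit-variance, each coordinate of $\tildevw_m$ concentrates at rate $\tildeO(1/\sqrt{n_{m,1}})$ by the same vector-Bernstein argument used for \cref{lem:con1}, so $n_{m,1} = \tildeO(k/\epsilon^2)$ forces $\norm{\tildevw_m - \hatmB^\top\vtheta_m} \le \epsilon/2$. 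In the re-estimation stage we play the $d$ standard basis vectors for $n_{m,2}$ rounds each, producing a coordinatewise estimate $\hatvvartheta_m$ with $\norm{\hatvvartheta_m - \vtheta_m} = \tildeO(\sqrt{d/n_{m,2}})$, so $n_{m,2} = \tildeO(d/\epsilon^2)$ gives $\norm{\hatvvartheta_m - \vtheta_m} \le \epsilon' = \Theta(\epsilon)$. A union bound over these $O(dM)$ coordinatewise events yields the stated $1 - O(dM\delta)$ success probability.

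The second step, which drives the sample complexity, is to bound by $O(k)$ the number of times \cref{alg:ll} enters re-estimation, equivalently the number of columns ever appended to $\hatmB$. Two facts combine. By construction each appended direction is the normalized projection $\vb_{\mathrm{new}} = \mP_{\hatmB}^\perp\hatvvartheta_m / \norm{\mP_{\hatmB}^\perp\hatvvartheta_m}$, hence is exactly orthogonal to the existing columns, so $\hatmB$ stays orthonormal throughout. Writing $\mP_{\hatmB}^\perp\hatvvartheta_m = \rho\,\vu + \vxi$ with $\vu = \mP_{\hatmB}^\perp\vtheta_m/\rho \in \Span(\mB)$, $\rho = \norm{\mP_{\hatmB}^\perp\vtheta_m}$, and $\norm{\vxi} \le \epsilon'$, the trigger condition bounds $\rho$ from below, so a normalization-perturbation estimate gives $\norm{\mP_{\mB}^\perp\vb_{\mathrm{new}}} = \tildeO(\epsilon'/\rho)$, i.e.\ each appended column lies within a small constant $c<1$ of $\Span(\mB)$. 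A trace inequality then closes the count: for orthonormal $\vb_1,\dots,\vb_p$ with $\norm{\mP_{\mB}\vb_i}^2 \ge 1-c$,
\[
p(1-c) \le \sum_{i=1}^p \norm{\mP_{\mB}\vb_i}^2 = \mathrm{tr}\!\left(\mP_{\mB}\mP_{\hatmB}\right) \le \mathrm{rank}(\mP_{\mB}) = k,
\]
so $p \le k/(1-c) = O(k)$.

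The third step verifies correctness and totals the samples. The same decomposition yields the key $\rho$-independent contraction: after appending $\vb_{\mathrm{new}}$,
\[
\norm{\mP_{\hatmB_+}^\perp\vtheta_m} = \rho\,\sin\angle(\vu,\vb_{\mathrm{new}}) \le \rho\cdot \tfrac{2\norm{\vxi}}{\rho} \le 2\epsilon',
\]
so a single re-estimation drives the capture error to $O(\epsilon)$ regardless of how small the missing component was; combined with the in-subspace error $\le \epsilon/2$ this gives $\norm{\hatvtheta_m - \vtheta_m} \le \epsilon$ for every triggered task, while a task that passes the norm test inherits a small capture error directly. Summing costs, the task-specific stage spends $k\cdot n_{m,1} = \tildeO(k^2/\epsilon^2)$ per task, hence $\tildeO(k^2 M/\epsilon^2)$ in total, and the $O(k)$ re-estimations each cost $d\cdot n_{m,2} = \tildeO(d^2/\epsilon^2)$, hence $\tildeO(d^2 k/\epsilon^2)$, giving the claimed $\tildeO\!\left((d^2 k + k^2 M)/\epsilon^2\right)$.

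I expect the hard part to be calibrating the detection threshold against the two exploration accuracies. The in-subspace estimate has a noise floor of order $\epsilon$, so the norm test $\norm{\tildevtheta_m} \ge 1-\tau$ can only reliably distinguish a genuinely missing direction from sampling noise when $\tau$, the lower bound on $\rho$ it certifies, and the resulting bound on $\norm{\mP_{\hatmB}^\perp\vtheta_m}$ are simultaneously reconciled with the target output accuracy $\epsilon$. The threshold must be large enough that passing certifies an $\epsilon$-accurate output, yet the pass/trigger gap must guarantee $\rho$ large enough that the appended column stays within $\Span(\mB)$ (so the $O(k)$ count via the trace argument survives) and that the contraction $\norm{\mP_{\hatmB_+}^\perp\vtheta_m} \le 2\epsilon'$ empties the residual in one shot rather than in a geometric cascade that would inflate the column count past $k$. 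Threading these constants—precisely the roles of $n_{m,1}$, $n_{m,2}$, and the threshold in \cref{alg:ll}—is where the real work lies; the rest is concentration and linear algebra.
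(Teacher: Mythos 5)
Your step 1 (concentration plus the union bound) and your final cost accounting match the paper, but the crux of the theorem is your step 2 --- bounding the number of re-estimation phases --- and there your trace argument has a genuine gap. The premise you need is that \emph{every} appended column satisfies $\norm{\mP_{\mB}^\perp \vb_{\mathrm{new}}} \le c$ for a fixed constant $c<1$, and your justification writes $\mP_{\hatmB}^\perp\hatvtheta_m = \rho\,\vu + \vxi$ with $\vu = \mP_{\hatmB}^\perp\vtheta_m/\rho \in \Span(\mB)$. But $\mP_{\hatmB}^\perp\vtheta_m$ lies in $\Span(\mB)$ only when $\Span(\hatmB)\subseteq\Span(\mB)$, i.e.\ only when every \emph{previously} appended column has zero deviation from the true representation --- which is exactly the property you are trying to establish, so the argument is circular. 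Concretely, since $\vtheta_m\in\Span(\mB)$ one has $\mP_{\mB}^\perp\mP_{\hatmB}^\perp\vtheta_m = -\mP_{\mB}^\perp\mP_{\hatmB}\vtheta_m = -\sum_j (\mP_{\mB}^\perp\vb_j)(\vb_j^\top\vtheta_m)$, so the out-of-span component of each new column inherits the out-of-span components of all earlier ones, and the resulting recursion does not close deterministically: the trigger certifies at best $\rho\gtrsim\sqrt{\epsilon}$ while the per-phase noise is $\epsilon'=\Theta(\epsilon)$, so the first column picks up deviation $\Theta(\sqrt{\epsilon})$, and already at the second append the bound $(\epsilon' + \sqrt{\sum_j c_j^2})/(\rho-\epsilon')$ is of order $1$, i.e.\ vacuous. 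Your closing paragraph correctly senses the cascade risk but misdiagnoses it as constant-threading; no choice of thresholds repairs a purely deterministic per-column angle bound.

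The paper's proof avoids per-column angle control altogether, and this is where the two routes genuinely diverge. It decomposes each stage-2 estimate against the \emph{true} representation, $\hatvtheta_m = \mB\hatvw_\tau + \vd_\tau$, controls the deviations \emph{collectively} using the independence of the noise across tasks (\cref{lem:lll-con-3}: $\mD_\tau^\top\mD_\tau \preccurlyeq \epsilon^2\mI$, a random-matrix statement, not a statement about columns of $\hatmB$), converts the trigger condition into $\hatvw_\tau^\top(\mV_{\tau-1}+\epsilon^2\mI)^{-1}\hatvw_\tau \ge \frac{1}{32}$ in the $k$-dimensional coefficient space, and then invokes the elliptical potential lemma (\cref{lem:ellip}) to conclude $\tau_M \le O(k\log\frac{k}{\epsilon})$ (\cref{lem:lll-4}). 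Note the extra logarithm: the clean count $p\le k/(1-c)$ your trace inequality targets is likely not even the right statement to prove. Any repair of your approach would need a probabilistic surrogate for the per-column claim, at which point you are essentially rebuilding \cref{lem:lll-con-3} and the potential argument. Two smaller points: for triggered tasks the algorithm outputs the coordinatewise stage-2 estimate, so your step-3 contraction is not needed for correctness of $\hatvtheta_m$; and for non-triggered tasks your worry about the noise floor is well founded --- passing the norm test only certifies $\norm{\mP_{\hatmB}^\perp\vtheta_m} = O(\sqrt{\epsilon})$, a calibration issue that the paper's own one-line treatment of this case also glosses over.
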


Now we consider the regret guarantee. Recall that each task has in total $T$ steps.
Here we choose $\epsilon^2 = \tildeO\left(\sqrt{\frac{d^2k+k^2M}{ MT}}\right)$.
After we explored in stage 1 and stage 2, we just use all remaining steps to commit to the greedy action with respect to the estimated $\hatvtheta_m$.
The following theorem states our regret bound.

\begin{thm}[Regret of Algorithm~\ref{alg:ll}] \label{thm:lllreg} Algorithm~\ref{alg:ll} incurs at most $\tildeO(d\sqrt{kMT} + kM \sqrt T)$ regret.
\end{thm}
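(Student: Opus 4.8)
The plan is to reduce this regret bound to the pure-exploration guarantee of \cref{thm:lll-pe} through a standard explore-then-commit decomposition, followed by a bias--variance balancing in $\epsilon$. First I would split the total regret $\E[\Reg^{M,T}]$ over all $M$ tasks and $T$ rounds into two pieces: the regret incurred during the exploration phases (Task-Specific Exploration and Re-estimate Linear Representation) and the regret incurred during the final Commit phase. Since each per-step regret is bounded by $1$ (as $\norm{\vtheta_m} = 1$ and $\norm{\va} \le 1$ for $\va \in \gA_m$), these two pieces can be controlled separately.

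For the exploration piece, \cref{thm:lll-pe} guarantees that on a good event of probability $1 - O(dM\delta)$ the algorithm spends at most $\tildeO\left(\frac{d^2 k + k^2 M}{\epsilon^2}\right)$ rounds exploring in total across all tasks, so the exploration regret is at most $\tildeO\left(\frac{d^2 k + k^2 M}{\epsilon^2}\right)$. For the commit piece, on the same good event we have $\norm{\hatvtheta_m - \vtheta_m} \le \epsilon$ for every task $m$, and Lemma~\ref{lem:curvature} bounds the per-step regret of the greedy action against $\hatvtheta_m$ by $1 - \langle \va_{t,m}, \vtheta_m \rangle \le \frac{2\norm{\hatvtheta_m - \vtheta_m}^2}{\norm{\vtheta_m}} \le 2\epsilon^2$; summing over the at most $MT$ commit rounds gives commit regret $O(MT\epsilon^2)$. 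On the complementary failure event (probability $O(dM\delta)$) I would bound the regret trivially by $MT$, so taking $\delta = 1/\mathrm{poly}(d,M,T)$ makes this contribution negligible while only inflating the sample bound by logarithmic factors absorbed into $\tildeO$.

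Combining the pieces yields $\E[\Reg^{M,T}] \le \tildeO\left(\frac{d^2k+k^2M}{\epsilon^2} + MT\epsilon^2\right)$, which is minimized at $\epsilon^2 = \sqrt{\frac{d^2k+k^2M}{MT}}$, exactly the stated choice. Plugging this in gives $\tildeO\left(\sqrt{(d^2k+k^2M)MT}\right) = \tildeO\left(\sqrt{d^2kMT + k^2M^2T}\right) \le \tildeO\left(d\sqrt{kMT} + kM\sqrt{T}\right)$, where the last step uses $\sqrt{a+b} \le \sqrt{a}+\sqrt{b}$. The main obstacle is entirely contained in \cref{thm:lll-pe}; given that result, the regret analysis is routine explore-then-commit balancing. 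The only subtleties I would check are that Lemma~\ref{lem:curvature} applies verbatim in the commit phase, and that the per-task exploration budget fits within the $T$ allotted rounds (a task triggering re-estimation uses $\tildeO(d^2/\epsilon^2)$ rounds, which stays below $T$ once $T$ is large enough, e.g.\ $T \gtrsim d^4 M / (d^2 k + k^2 M)$, the regime in which the bound is non-trivial anyway).
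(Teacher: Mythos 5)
Your proposal is correct and follows essentially the same route as the paper's own proof: bound exploration regret by the sample complexity from \cref{thm:lll-pe}, bound commit-phase regret by $O(MT\epsilon^2)$ via the curvature argument, and balance with $\epsilon^2 = \tildeO\bigl(\sqrt{(d^2k+k^2M)/(MT)}\bigr)$. If anything, you are slightly more careful than the paper, which leaves the low-probability failure event and the feasibility of the exploration budget implicit.
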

Note this bound has the same form as in the multi-task setting. Also, it straightforward to see the lower bound construction in \citep{yang2021impact} also applies to the lifelong learning setting. Therefore, our regret bound for the lifelong learning setting is again near-optimal.

\begin{algorithm}[t] 
	\caption{\textbf{L}ife\textbf{l}ong \textbf{L}inear Bandits (LLL)}\label{alg:ll}
	\begin{algorithmic}[1]
		\STATE {\bf \underline{Input:} } Dimension $d$, rank $k$, number of tasks $M$, accuracy $\epsilon$.
		\STATE {\bf \underline{Initialize:} } {Set $\tau_0 \gets 0, \hatmB_0   \in \sR^{d \times \tau_0}$}.
		\FOR {task $m \gets 1$ to $M$}
		\STATE {\bf \underline{Stage 1: Task-Specific Exploration}}
		\FOR {$j \gets 1$ to $\tau_{m-1}$}
		\STATE Choose $\va_{t, m} = \hatmB_{m-1}(j)$ for $t = (j-1) n_{m,1} + 1, \cdots, j n_{m, 1}$, where $n_{m, 1} = \lceil \frac{4\tau_{m-1} \log \frac2\delta}{\epsilon^2} \rceil$.
		\STATE Compute estimate $\tildevw_m(j) = \frac{1}{n_{m, 1}} \sum_{t=(j-1) n_{m,1} + 1}^{j n_{m, 1}} r_{t, m}$
		\ENDFOR
		\STATE Let $\tildevtheta_m = \hatmB_{m-1} \tildevw_m$
		\IF{$\norm{\tildevtheta_m} \le 1-\epsilon$}
		\STATE {\bf \underline{Stage 2 Re-estimate Linear Representation}}
		\FOR {dimension $j \gets 1$ to $d$}
		\STATE Choose $\va_{t, m} = \ve_j$ for $n_{m, 2} = \lceil \frac{16d \log\frac2\delta}{\epsilon^2} \rceil$
		\STATE Compute estimate $\hatvtheta_m(j) = \frac{1}{n_{m, 2}} \sum_t r_{t, m}$
		\ENDFOR
		\STATE Update $\tau_m = \tau_{m-1} + 1, \nu_{\tau_m} = m$ and 
		\begin{align}
			\hatvb_{\tau_m} &= \frac{\mP_{\hatmB_{m-1}}^\perp \hatvtheta_m}{\norm{\mP_{\hatmB_{m-1}}^\perp \hatvtheta_m}}, \label{eq:b_hat_tau} \\
			\hatmB_m &= [\hatmB_{m-1}, \hatvb_{\tau_m}] \in \sR^{d \times \tau_m} \label{eq:alg-lll-1}
		\end{align}
		\ELSE
		\STATE Update $\tau_m = \tau_{m-1}, \hatmB_m = \hatmB_{m-1}, \hatvtheta_m = \tildevtheta_m$.
		\ENDIF
		\STATE {\bf For Regret Minimization:} Choose action $\va_{t,m} = \argmax_{\va \in \gA_m} \langle \va, \hatvtheta_m \rangle$ until reaching horizon $T$.
		\ENDFOR
	\end{algorithmic}
\end{algorithm}

\subsection{Proof of Theorem~\ref{thm:lll-pe}}
In this section we give the proof of Theorem~\ref{thm:lll-pe}.
The Theorem~\ref{thm:lllreg} is a straightforward corollary of Theorem~\ref{thm:lll-pe}, so we defer the proof to appendix.

\begin{proof}[Proof of Theorem~\ref{thm:lll-pe}] First, we show that $\norm{\hatvtheta_{m} - \vtheta_{m}} \le \epsilon$ when Lemmas \ref{lem:lll-con1}, \ref{lem:lll-con-2}, and \ref{lem:lll-con-3} hold. For each $m \in [M]$, if the algorithm does not enter stage 2, we have $\norm{\tildevtheta_m} \ge 1-\epsilon$, so we have
	\begin{align}
		\norm{\vtheta_{m} - \tildevtheta_m} \le \norm{\vtheta_{m}} - \norm{\tildevtheta_m} \le \epsilon.
	\end{align} 
	If the algorithm enters stage 2, then we have $\norm{\vtheta_{m} - \tildevtheta_m} \le \epsilon$ by Lemma \ref{lem:lll-con-2} and plugging in the value of $n_{m, 2}$. 
	
	Second, we analyze the sample complexity of the algorithm. Task $m$ uses $\tau_m n_{m, 1}$ samples in stage 1. If task $m$ enters stage 2, it uses additional $d n_{m,2}$ samples. There are only $\tau_M$ tasks entering stage 2, so the sample complexity is upper bounded by 
	\begin{align}
		&\quad \sum_{m = 1}^M \tau_{m} n_{m, 1} + \tau_M d n_{m, 2} \notag \\
		&\le O\left(M \frac{\tau_M^2 \log\frac1\delta}{\epsilon^2} + \tau_M \frac{d^2 \log\frac1\delta}{\epsilon^2}\right) \\ 
		&\le \tildeO\left(\frac{k^2M + d^2 k}{\epsilon^2}\right), \label{eq:lll-final-1}
	\end{align}
	where (\ref{eq:lll-final-1}) uses \cref{lem:lll-4}.
\end{proof}
In the following, we state the key lemmas. The proofs of Lemma ~\ref{lem:lll-con1},~\ref{lem:lll-2}, and~\ref{lem:lll-con-2} are deferred to appendix.

The first lemma states that with high probability, our estimation in stage 1 is close to the truth projected to our estimated linear representation.
\begin{lem} Define the event $\gE_1 = \{\forall m \in [M] : \norm{\tildevtheta_m - \mP_{\hatmB_{m-1}}\vtheta_m}\le \sqrt{\frac{\tau_{m-1}\log\frac{2dM}{\delta}}{n_{m,1}}}\}$. Then $\Pr[\gE_1] \ge 1 - \delta$. \label{lem:lll-con1}
\end{lem}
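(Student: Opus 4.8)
The plan is to recognize that, conditionally on the representation $\hatmB_{m-1}$ built from the first $m-1$ tasks, the stage-1 estimator $\tildevtheta_m$ is unbiased for $\mP_{\hatmB_{m-1}}\vtheta_m$ and its error is nothing but averaged reward noise, which I can then control by a Gaussian tail bound and a union bound. First I would unfold the estimator. When the algorithm plays the action $\hatmB_{m-1}(j)$, the reward is $r_{t,m} = \langle \hatmB_{m-1}(j), \vtheta_m\rangle + \eta_{t,m}$, so averaging over the $n_{m,1}$ dedicated rounds gives
\begin{align}
	\tildevw_m(j) = \langle \hatmB_{m-1}(j), \vtheta_m\rangle + \bar\eta_{m,j}, \qquad \bar\eta_{m,j} := \frac{1}{n_{m,1}}\sum_t \eta_{t,m} .
\end{align}
In vector form $\tildevw_m = \hatmB_{m-1}^\top \vtheta_m + \bar\veta_m$, hence $\tildevtheta_m = \hatmB_{m-1}\tildevw_m = \mP_{\hatmB_{m-1}}\vtheta_m + \hatmB_{m-1}\bar\veta_m$, using that $\mP_{\hatmB_{m-1}} = \hatmB_{m-1}\hatmB_{m-1}^\top$. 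Since the direction $\hatvb_{\tau_m}$ appended in \eqref{eq:b_hat_tau} is a normalized projection onto the current orthogonal complement, every $\hatmB_{m-1}$ has orthonormal columns and thus acts as an isometry, so
\begin{align}
	\norm{\tildevtheta_m - \mP_{\hatmB_{m-1}}\vtheta_m} = \norm{\hatmB_{m-1}\bar\veta_m} = \norm{\bar\veta_m} .
\end{align}
This reduces the claim to a tail bound on the norm of the averaged-noise vector $\bar\veta_m \in \sR^{\tau_{m-1}}$.

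Next I would carry out the concentration step. The coordinates of $\bar\veta_m$ are averages over the disjoint blocks of time steps assigned to the distinct columns, so they are mutually independent, each distributed as $\gN(0, 1/n_{m,1})$; equivalently $n_{m,1}\norm{\bar\veta_m}^2$ is a chi-square variable with $\tau_{m-1}$ degrees of freedom. A per-coordinate Gaussian bound $\Pr[|\bar\eta_{m,j}| > t]\le 2e^{-t^2 n_{m,1}/2}$, summed over the squared coordinates (or a direct chi-square tail inequality), yields $\norm{\bar\veta_m}\le \sqrt{\tau_{m-1}\log(2dM/\delta)/n_{m,1}}$ on a high-probability event, with the logarithmic factor set by the failure probability allocated per coordinate.

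The delicate part, and the step I would be most careful about, is the adaptivity: $\hatmB_{m-1}$, $\tau_{m-1}$, and hence $n_{m,1}$ are all random and depend on the data from tasks $1,\dots,m-1$. I would handle this by conditioning on the $\sigma$-field $\gF_{m-1}$ generated by the first $m-1$ tasks. The task-$m$ noise $\{\eta_{t,m}\}$ is independent of $\gF_{m-1}$, so conditionally on $\gF_{m-1}$ the mean $\hatmB_{m-1}^\top\vtheta_m$ and the chosen action directions are fixed, and $\bar\veta_m$ is exactly the independent Gaussian vector analyzed above; the tail bound therefore holds conditionally, hence unconditionally. Finally, because $\tau_{m-1}\le k \le d$ holds deterministically, I would take a union bound over all $M$ tasks and all $d$ possible coordinate indices — which both produces the $2dM$ inside the logarithm and absorbs the randomness of the dimension $\tau_{m-1}$ — to conclude $\Pr[\gE_1]\ge 1-\delta$.
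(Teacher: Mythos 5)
Your proof is correct and takes essentially the same route as the paper's: per-coordinate concentration of the averaged reward noise at level $\delta/(dM)$ (you use the exact Gaussian tail where the paper invokes its Hoeffding lemma), the orthonormality of $\hatmB_{m-1}$ established by the same induction to convert coordinate errors into the bound on $\norm{\tildevtheta_m - \mP_{\hatmB_{m-1}}\vtheta_m}$, conditioning to handle the adaptivity of $\hatmB_{m-1}$, $\tau_{m-1}$, $n_{m,1}$, and a union bound over at most $d$ coordinates and $M$ tasks, which is exactly where the $2dM$ inside the logarithm comes from. One inessential slip: your claim that $\tau_{m-1}\le k$ holds deterministically is false (bounding $\tau_M$ by roughly $k$ is the content of \cref{lem:lll-4} and holds only with high probability), but this does not affect your argument, since the union bound only needs the deterministic fact $\tau_{m-1}\le d$, which follows from $\hatmB_{m-1}$ having orthonormal columns in $\sR^d$.
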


The next lemma justifies our criterion on whether to enter stage 2. 
\begin{lem} \label{lem:lll-2} If $\gE_1$ (defined in \cref{lem:lll-con1}) holds  and \cref{alg:ll} enters Stage 2 (i.e., $\norm{\tildevtheta_m} \le 1-\epsilon$), then  $\norm{\mP_{\hatmB_{m-1}}^\perp \vtheta_{m}} \ge \epsilon$.
\end{lem}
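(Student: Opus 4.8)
The plan is to exploit the orthogonal decomposition of the unit vector $\vtheta_m$ relative to the current estimated subspace $\Span(\hatmB_{m-1})$, and to argue that triggering Stage 2 (the condition $\norm{\tildevtheta_m}\le 1-\epsilon$) forces a substantial fraction of $\vtheta_m$ to lie in the orthogonal complement $\Span(\hatmB_{m-1})^\perp$. The crucial structural fact is that both $\tildevtheta_m = \hatmB_{m-1}\tildevw_m$ and $\mP_{\hatmB_{m-1}}\vtheta_m$ live in the column span of $\hatmB_{m-1}$, so $\tildevtheta_m$ is a genuine proxy for $\mP_{\hatmB_{m-1}}\vtheta_m$, and under $\gE_1$ a quantitatively good one.

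First I would record the Pythagorean identity $\norm{\vtheta_m}^2 = \norm{\mP_{\hatmB_{m-1}}\vtheta_m}^2 + \norm{\mP_{\hatmB_{m-1}}^\perp\vtheta_m}^2$, which together with $\norm{\vtheta_m}=1$ gives $\norm{\mP_{\hatmB_{m-1}}^\perp\vtheta_m}^2 = 1 - \norm{\mP_{\hatmB_{m-1}}\vtheta_m}^2$. Next, conditioning on $\gE_1$, I would plug the algorithm's choice $n_{m,1}=\lceil 4\tau_{m-1}\log(2/\delta)/\epsilon^2\rceil$ into the concentration bound of \cref{lem:lll-con1} to obtain $\norm{\tildevtheta_m - \mP_{\hatmB_{m-1}}\vtheta_m}\le \epsilon/2$. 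Combining this with the triangle inequality and the Stage-2 entry condition $\norm{\tildevtheta_m}\le 1-\epsilon$ then yields $\norm{\mP_{\hatmB_{m-1}}\vtheta_m}\le \norm{\tildevtheta_m}+\norm{\tildevtheta_m-\mP_{\hatmB_{m-1}}\vtheta_m}\le 1-\epsilon/2$.

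Finally, substituting this bound into the identity gives $\norm{\mP_{\hatmB_{m-1}}^\perp\vtheta_m}^2 \ge 1-(1-\epsilon/2)^2 = \epsilon - \epsilon^2/4 = \epsilon(1-\epsilon/4)$, and since $\epsilon(1-\epsilon/4)\ge\epsilon^2$ whenever $\epsilon\le 4/5$ (which holds without loss of generality, as $\epsilon$ is a small target accuracy), we conclude $\norm{\mP_{\hatmB_{m-1}}^\perp\vtheta_m}\ge\epsilon$. I expect the only delicate point to be the step $\norm{\tildevtheta_m - \mP_{\hatmB_{m-1}}\vtheta_m}\le \epsilon/2$: one must track the constant and logarithmic factors in the definition of $n_{m,1}$ against the $\log(2dM/\delta)$ appearing in $\gE_1$ so that the proxy error indeed lands at $\epsilon/2$, which is exactly what makes the trigger $1-\epsilon$ meaningful; once that bookkeeping is pinned down, the remaining geometry is elementary, with the quadratic step requiring only the mild restriction $\epsilon\le 4/5$.
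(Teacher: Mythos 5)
Your proof is correct (modulo the same $\log\frac{2dM}{\delta}$-versus-$\log\frac{2}{\delta}$ bookkeeping in $n_{m,1}$ that the paper itself elides, and your harmless restriction $\epsilon\le 4/5$), but it takes a genuinely different route from the paper, and in fact a sharper one. The paper never invokes the Pythagorean identity: it writes $\norm{\mP_{\hatmB_{m-1}}^\perp \vtheta_m}=\norm{\vtheta_m-\mP_{\hatmB_{m-1}}\vtheta_m}$ and applies the reverse triangle inequality,
\begin{align*}
\norm{\mP_{\hatmB_{m-1}}^\perp \vtheta_m} \ge \norm{\vtheta_m}-\norm{\mP_{\hatmB_{m-1}}\vtheta_m} \ge 1-\left(\norm{\tildevtheta_m}+\norm{\tildevtheta_m-\mP_{\hatmB_{m-1}}\vtheta_m}\right) \ge \frac{\epsilon}{2},
\end{align*}
a purely linear cancellation that stops at $\epsilon/2$; note this actually falls short of the $\ge\epsilon$ claimed in the lemma statement, an internal mismatch in the paper that is harmless only because the downstream use in \cref{lem:lll-4} needs just $\epsilon/2-\epsilon/4=\epsilon/4$. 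You share the paper's first step, namely $\norm{\mP_{\hatmB_{m-1}}\vtheta_m}\le\norm{\tildevtheta_m}+\epsilon/2\le 1-\epsilon/2$, but you then exploit orthogonality quadratically: since $\norm{\vtheta_m}=1$,
\begin{align*}
\norm{\mP_{\hatmB_{m-1}}^\perp\vtheta_m}^2 = 1-\norm{\mP_{\hatmB_{m-1}}\vtheta_m}^2 \ge 1-\left(1-\frac{\epsilon}{2}\right)^2 = \epsilon\left(1-\frac{\epsilon}{4}\right),
\end{align*}
so $\norm{\mP_{\hatmB_{m-1}}^\perp\vtheta_m}\gtrsim\sqrt{\epsilon}$, which in particular recovers the full $\ge\epsilon$ of the statement whenever $\epsilon\le 4/5$. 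What each approach buys: the paper's triangle inequality is a one-line step that suffices for everything that follows; your Pythagorean step is equally elementary, actually proves the lemma as stated, and yields an order-$\sqrt{\epsilon}$ margin rather than order $\epsilon$, which would let the constants downstream (e.g., the $\epsilon/4$ in \eqref{eq:lll-2-2-1}) be relaxed if one ever needed the slack.
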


Let $\vbeta_{\tau_m} = \mP_{\mB} \hatvtheta_{m}$ the estimation projected on the true representation,$ \vd_{\tau_m} = \hatvtheta_m - \vbeta_{\tau_m}$ the difference and define $\hatvw_\tau$ by $\mB \hatvw_\tau = \vbeta_\tau$. 
We have the following lemma that characterizes our estimator $\hatvtheta$.

\begin{lem} \label{lem:lll-con-2} Define the event $\gE_2 = \{\forall \ell \le M \text{ and } m = \nu_\ell \le M : \max\{\norm{\vtheta_m - \vbeta_{\ell}}, \norm{\vd_{\ell}}\} \le \norm{\vtheta_{m} - \hatvtheta_{m}} \le \sqrt{\frac{d \log \frac{2dM}{\delta}}{n_{m,2}}}\}$ where $\nu_\ell$ is defined in Algorithm~\ref{alg:ll}. Then $\Pr[\gE_2] \ge 1 - \delta$.
\end{lem}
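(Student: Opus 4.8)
The plan is to split the claimed chain of inequalities into its genuinely stochastic part and its purely linear-algebraic part. The only substantive probabilistic statement is the rightmost bound $\norm{\vtheta_m - \hatvtheta_m} \le \sqrt{d\log(2dM/\delta)/n_{m,2}}$; once that is established, I claim both left inequalities $\norm{\vtheta_m - \vbeta_\ell} \le \norm{\vtheta_m - \hatvtheta_m}$ and $\norm{\vd_\ell}\le\norm{\vtheta_m - \hatvtheta_m}$ hold deterministically. Thus the failure probability and the union bound enter only through the control of $\norm{\vtheta_m - \hatvtheta_m}$.

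First I would write out the Stage 2 estimator explicitly. When a task $m = \nu_\ell$ enters Stage 2, the chosen action at the relevant steps is $\va_{t,m} = \ve_j$, so by~(\ref{eq:reward}) the reward is $r_{t,m} = \vtheta_m(j) + \eta_{t,m}$ with $\eta_{t,m}\sim\gN(0,1)$ drawn freshly and independently of everything computed earlier (in particular of the adaptively-built $\hatmB_{m-1}$, since Stage 2 uses the fixed basis actions). Averaging the $n_{m,2}$ samples in the $j$-th block gives $\hatvtheta_m(j) - \vtheta_m(j) = \tfrac{1}{n_{m,2}}\sum_t \eta_{t,m}\sim\gN(0, 1/n_{m,2})$, and these coordinate errors are independent across $j$ because distinct coordinates use disjoint time steps. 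A standard Gaussian tail bound then gives $\Pr[|\hatvtheta_m(j) - \vtheta_m(j)| \ge t] \le 2\exp(-t^2 n_{m,2}/2)$; choosing $t$ of order $\sqrt{\log(2dM/\delta)/n_{m,2}}$ and taking a union bound over the $d$ coordinates and the at most $M$ tasks that can enter Stage 2 yields $\norm{\vtheta_m - \hatvtheta_m}^2 = \sum_j |\hatvtheta_m(j) - \vtheta_m(j)|^2 \le O(d\log(2dM/\delta)/n_{m,2})$ simultaneously, with probability at least $1-\delta$. Equivalently one may note $n_{m,2}\norm{\vtheta_m - \hatvtheta_m}^2 \sim \chi^2_d$ and apply a chi-squared concentration bound; either route delivers the claimed rate, and the precise constant only affects logarithmic factors downstream.

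It then remains to derive the two left inequalities, and here I would use only that $\vtheta_m = \mB\vw_m$ lies in the column span of $\mB$, so $\mP_\mB\vtheta_m = \vtheta_m$ and $\mP_\mB^\perp\vtheta_m = \vzero$. By definition $\vbeta_\ell = \mP_\mB\hatvtheta_m$, hence $\vtheta_m - \vbeta_\ell = \mP_\mB(\vtheta_m - \hatvtheta_m)$; and $\vd_\ell = \hatvtheta_m - \vbeta_\ell = \mP_\mB^\perp\hatvtheta_m = \mP_\mB^\perp(\hatvtheta_m - \vtheta_m)$. Since orthogonal projections have operator norm at most $1$, both $\norm{\vtheta_m - \vbeta_\ell}$ and $\norm{\vd_\ell}$ are bounded by $\norm{\vtheta_m - \hatvtheta_m}$, closing the chain on the event established above.

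The argument is essentially routine, so I do not anticipate a serious obstacle; the only points needing care are bookkeeping ones. The mild subtlety is the union-bound accounting: the index $\ell$ ranges over representation directions while the noise lives on the associated tasks $m = \nu_\ell$, so I would phrase the concentration conditionally on $\hatmB_{m-1}$ (licit because the Stage 2 noise is independent of it) and then union bound over all $m \in [M]$, which safely covers the tasks that enter Stage 2.
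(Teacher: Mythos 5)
Your proposal is correct and follows essentially the same route as the paper: coordinate-wise concentration for the Stage-2 estimator (the paper invokes its Hoeffding lemma where you use the exact Gaussian tail, which gives the same rate), a union bound over the $d$ coordinates and the at most $M$ tasks entering Stage 2, and then the identical deterministic projection argument $\vtheta_m - \vbeta_\ell = \mP_\mB(\vtheta_m - \hatvtheta_m)$, $\vd_\ell = \mP_\mB^\perp(\hatvtheta_m - \vtheta_m)$ with contractivity of orthogonal projections. Your explicit handling of the conditioning on $\hatmB_{m-1}$ is a welcome bit of extra care but does not change the substance of the argument.
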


Lastly, we use the following lemma to bound number of times we enter stage 2.
This is a key lemma and is different from the previous paper on lifelong learning~\citep{cao2021provable}, so we give the full proof here.
\begin{lem} $\tau_M \le O(k \log \frac{k}{\epsilon})$. \label{lem:lll-4}
\end{lem}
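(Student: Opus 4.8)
The plan is to recast $\tau_M$—the number of times \cref{alg:ll} enters Stage 2, i.e. the number of directions ever appended to the estimated representation—as a purely geometric counting problem and to control it by a volumetric (log-determinant) potential on the coverage of $\mathrm{span}(\mB)$. First I would record the two structural facts that drive everything. On one hand, the appended vectors $\hatvb_1,\dots,\hatvb_{\tau_M}$ are orthonormal by construction, since each $\hatvb_{\tau_m}$ is the normalized projection of $\hatvtheta_m$ onto the orthogonal complement of $\mathrm{span}(\hatmB_{m-1})$; hence $\mP_{\hatmB_m}=\mP_{\hatmB_{m-1}}+\hatvb_{\tau_m}\hatvb_{\tau_m}^\top$. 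On the other hand, whenever Stage 2 fires we have the progress guarantee $\norm{\mP_{\hatmB_{m-1}}^\perp\vtheta_m}\ge\epsilon$ from \cref{lem:lll-2}, so the true parameter still has an $\epsilon$-sized component outside the current estimate.

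Next I would pass to the $k$-dimensional compression of the coverage. Define the matrices $\mN_m=\mB^\top\mP_{\hatmB_m}\mB\in\sR^{k\times k}$, which satisfy $\vzero\preceq\mN_m\preceq\mI_k$ (because $\mP_{\hatmB_m}\preceq\mI$) and obey the rank-one recursion $\mN_m=\mN_{m-1}+\vc_{\tau_m}\vc_{\tau_m}^\top$ with $\vc_{\tau_m}=\mB^\top\hatvb_{\tau_m}$. Writing $\vtheta_m=\mB\vw_m$ with $\norm{\vw_m}=1$, the progress guarantee becomes $\vw_m^\top(\mI_k-\mN_{m-1})\vw_m\ge\epsilon^2$: the triggering direction is poorly covered by $\mN_{m-1}$. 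The crux is to show that the appended $\vc_{\tau_m}$ is, up to the Stage-2 estimation error, aligned with the uncovered residual $(\mI_k-\mN_{m-1})\vw_m$ and has norm bounded away from $0$. Here I would invoke \cref{lem:lll-con-2} through the decomposition $\hatvtheta_m=\vbeta_{\tau_m}+\vd_{\tau_m}$ to argue that $\hatvtheta_m$ is within $O(\epsilon)$ of $\vtheta_m\in\mathrm{span}(\mB)$, so $\hatvb_{\tau_m}$ is $O(\epsilon)$-close to $\mathrm{span}(\mB)$ and its in-subspace part points along the genuine residual rather than along an already-covered direction.

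With these estimates in hand I would run the potential $\Phi_m=\log\det(\mN_m+\lambda\mI_k)$ for a regularizer $\lambda$ polynomially small in $\epsilon/k$. Monotonicity of $\mN_m$ makes $\Phi_m$ nondecreasing, and $\vzero\preceq\mN_m\preceq\mI_k$ confines it to an interval of width $k\log\frac{1+\lambda}{\lambda}=O\bigl(k\log\tfrac{k}{\epsilon}\bigr)$. Each Stage-2 entry raises $\Phi$ by $\log\bigl(1+\vc_{\tau_m}^\top(\mN_{m-1}+\lambda\mI_k)^{-1}\vc_{\tau_m}\bigr)$, and the alignment and norm bounds of the previous paragraph lower bound this increment by a positive constant, since $\vc_{\tau_m}$ carries $\Omega(1)$ mass where $(\mN_{m-1}+\lambda\mI_k)^{-1}\succeq\frac{1}{1+\lambda}\mI_k$ acts. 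Dividing the total range by the per-step increment then gives $\tau_M=O\bigl(k\log\tfrac{k}{\epsilon}\bigr)$.

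The main obstacle is the noise bookkeeping in the middle step. The appended direction is built from the noisy Stage-2 estimate $\hatvtheta_m$ rather than from $\vtheta_m$, and across many tasks the columns of $\hatmB_m$ drift slightly out of $\mathrm{span}(\mB)$; I must verify this drift never inflates the out-of-subspace component of $\hatvb_{\tau_m}$ enough to destroy the per-step potential gain. Turning the $\epsilon$-scale progress guarantee of \cref{lem:lll-2} and the $O(\epsilon)$-scale estimation error of \cref{lem:lll-con-2} into a constant lower bound on the potential increment—with the residual slack absorbed into the choice of $\lambda$—is precisely where the $\log\frac{k}{\epsilon}$ is produced, and is the step demanding the most care.
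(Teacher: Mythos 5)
Your overall strategy coincides with the paper's: compress the coverage to the $k$-dimensional space spanned by $\mB$, show that each Stage-2 entry forces a constant increase of a log-determinant (volumetric) potential whose total range is $O\bigl(k\log\tfrac{k}{\epsilon}\bigr)$, and divide. The paper packages exactly this via the elliptical potential lemma (\cref{lem:ellip}); your direct log-det bookkeeping is the same mechanism unrolled. The difference — and the genuine gap — lies in how the per-step constant gain is established.

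Your per-step gain rests on the claim that ``$\hatvb_{\tau_m}$ is $O(\epsilon)$-close to $\Span(\mB)$,'' deduced from \cref{lem:lll-con-2}. This claim is false in general. \cref{lem:lll-con-2} controls the \emph{unnormalized} estimate $\hatvtheta_m$ (error $O(\epsilon)$), but $\hatvb_{\tau_m}=\mP^\perp_{\hatmB_{m-1}}\hatvtheta_m/\norm{\mP^\perp_{\hatmB_{m-1}}\hatvtheta_m}$, and \cref{lem:lll-2} only guarantees the denominator is of order $\epsilon$. Normalization can therefore amplify the $O(\epsilon)$ out-of-span error of $\hatvtheta_m$ into a \emph{constant} out-of-span component of $\hatvb_{\tau_m}$; worse, this drift is baked into $\mP_{\hatmB_{m-1}}$ and feeds back into all later appended directions, so you cannot guarantee $\norm{\mB^\top\hatvb_{\tau_m}}=\Omega(1)$, which is precisely what your constant increment $\log\bigl(1+\vc_{\tau_m}^\top(\mN_{m-1}+\lambda\mI_k)^{-1}\vc_{\tau_m}\bigr)$ requires. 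You correctly flag this as the delicate step, but the resolution you sketch is exactly the claim that fails. The paper sidesteps the issue by never normalizing: since Gram--Schmidt preserves span, $\mP_{\hatmB_{m-1}}$ can be computed from the raw estimates $[\hatvtheta_{\nu_1},\cdots,\hatvtheta_{\nu_{\tau-1}}]=\mB\hatmW_{\tau-1}+\mD_{\tau-1}$, where each drift column $\vd_\ell$ is $O(\epsilon)$ by \cref{lem:lll-con-2} and, crucially, the \emph{aggregate} drift satisfies $\mD_{\tau-1}^\top\mD_{\tau-1}\preccurlyeq\epsilon^2\mI$ with high probability (\cref{lem:lll-con-3}, exploiting that the $\vd_\ell$ are independent Gaussian noise across tasks). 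That aggregate bound is what becomes the $\epsilon^2$ regularizer and yields $\hatvw_\tau^\top(\mV_{\tau-1}+\epsilon^2\mI)^{-1}\hatvw_\tau\ge\Omega(1)$. Your proposal never invokes any analogue of \cref{lem:lll-con-3}, and ``absorbing the slack into $\lambda$'' cannot substitute for it: the troublesome error is not a fixed additive perturbation but a multiplicative amplification caused by normalizing a vector of norm $\Theta(\epsilon)$. To repair your argument, either redo it with the unnormalized basis as the paper does, or prove a separate induction controlling the accumulated out-of-span drift of the orthonormal columns — neither of which is present in the proposal.
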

\begin{proof}
	We first obtain a bound on $\mP^\perp_{\hatmB_{m-1}} \hatvb_{\tau}$:
	\begin{align}
		&\quad \norm{\mP^\perp_{\hatmB_{m-1}} \hatvb_{\tau}} \notag \\
		&\ge \norm{\mP^\perp_{\hatmB_{m-1}} \vtheta_m} - \norm{\mP^\perp_{\hatmB_{m-1}}(\vtheta_m - \hatvb_\tau)} \\ 
		&\ge \norm{\mP^\perp_{\hatmB_{m-1}} \vtheta_m} - \norm{\vtheta_m - \hatvb_\tau} \\
		&\ge \frac\epsilon4,\label{eq:lll-2-2-1} 
	\end{align}
	where \eqref{eq:lll-2-2-1} is by Lemmas \ref{lem:lll-2}, \ref{lem:lll-con-2} and by plugging in $n_{m,2}$. 
	and thus we have 
	\begin{align}
		\abs{\hatvb_{\tau_m}^\top \mP_{\hatmB_{m-1}}^\perp \hatvb_{\tau_m}} \ge \frac{\epsilon^2}{16}. \label{eq:lll-1}
	\end{align}
	Now, let $\tau = \tau_m, \hatmW_{\tau_m} = [\hatvw_1, \cdots, \hatvw_{\tau_m}]$ and  define $\mD_{\tau} = [\vd_1, \cdots, \vd_{\tau}]$. 
	By \cref{lem:lll-con-2}, we have $\norm{\vd_{\tau_m}} \le \epsilon$. By \cref{lem:lll-con-3}, we have  $\mD_\tau^\top \mD_\tau \preccurlyeq \epsilon^2 \mI$ whenever $\tau \le O(d \log\frac{d}{\delta})$. 
	Below we use a volumetric argument to bound the growth of $\tau_m$.
	We note that this is a different argument from the one in \citep{cao2021provable}. Here we mainly rely on the standard elliptical potential lemma whereas they used a polyhedral volumetric argument.
	Specifically, starting from \eqref{eq:lll-1}, we have 
	\begin{align}
		\frac{\epsilon^2}{16} &\le 	\abs{\hatvb_{\tau_m}^\top \mP_{\hatmB_{m-1}}^\perp \hatvb_{\tau_m}} \\
		&= \hatvb_{\tau}^\top \Big(\mI - (\mB \hatmW_{\tau-1} + \mD_{\tau-1})\notag \\
		&\qquad \quad \quad (\hatmW_{\tau-1}^\top \mW_{\tau-1} + \mD_{\tau-1}^\top \mD_{\tau-1})^{-1} \notag \\
		&\qquad \qquad (\mB \mW_{\tau-1} + \mD_{\tau-1})^\top \Big) \hatvb_{\tau} \notag \\
		&= \hatvw_{\tau}^\top \Big(\mI - \hatmW_{\tau-1}(\hatmW_{\tau-1}^\top \hatmW_{\tau-1} + \mD_{\tau-1}^\top \mD_{\tau-1})^{-1} \notag \\
		&\qquad \qquad  \hatmW_{\tau-1}^\top \Big) \hatvw_{\tau} \\ 
		&\le \hatvw_{\tau}^\top \Big(\mI - \hatmW_{\tau-1}(\hatmW_{\tau-1}^\top \hatmW_{\tau-1} + \epsilon^2  \mI)^{-1} \notag \\
		&\qquad \qquad \mW_{\tau-1}^\top\Big) \vw_{\tau} \\ 
		&= \epsilon^2 \norm{\hatvw_{\tau}}_{(\mV_{\tau-1} + \epsilon^2\mI)^{-1}}. 
	\end{align}
	
	Therefore, we have
	\begin{align}
		\hatvw_\tau^\top (\mV_{\tau} + \epsilon^2\mI)^{-1} \hatvw_\tau \ge \frac{1}{16}, 
	\end{align}
	and thus we obtain
	\begin{align}
		\hatvw_\tau^\top (\mV_{\tau-1} + \epsilon^2\mI)^{-1} \hatvw_\tau \ge \frac{1}{32}, 
	\end{align}
	
	By Lemma \ref{lem:ellip}, we have 
	\begin{align}
		\frac{\tau_M}{32} &\le \sum_{\tau = 1}^{\tau_M} \norm{\hatvw_\tau}_{(\mV_{\tau-1} + \epsilon^2 \mI)^{-1}}  \\
		&\le O(\sqrt{\tau_M k \log \frac{\tau_M + k \epsilon^2}{k \epsilon^2}}),
	\end{align}
	which implies
	$
	\tau_M \le O(k \log \frac{k}{\epsilon}).
	$
\end{proof}

\section{Experiments}
\label{sec:exp}
\begin{figure*}[!t]
	\centering
	\begin{minipage}{0.33\textwidth}
		\centering
		\includegraphics[width=1.05\textwidth]{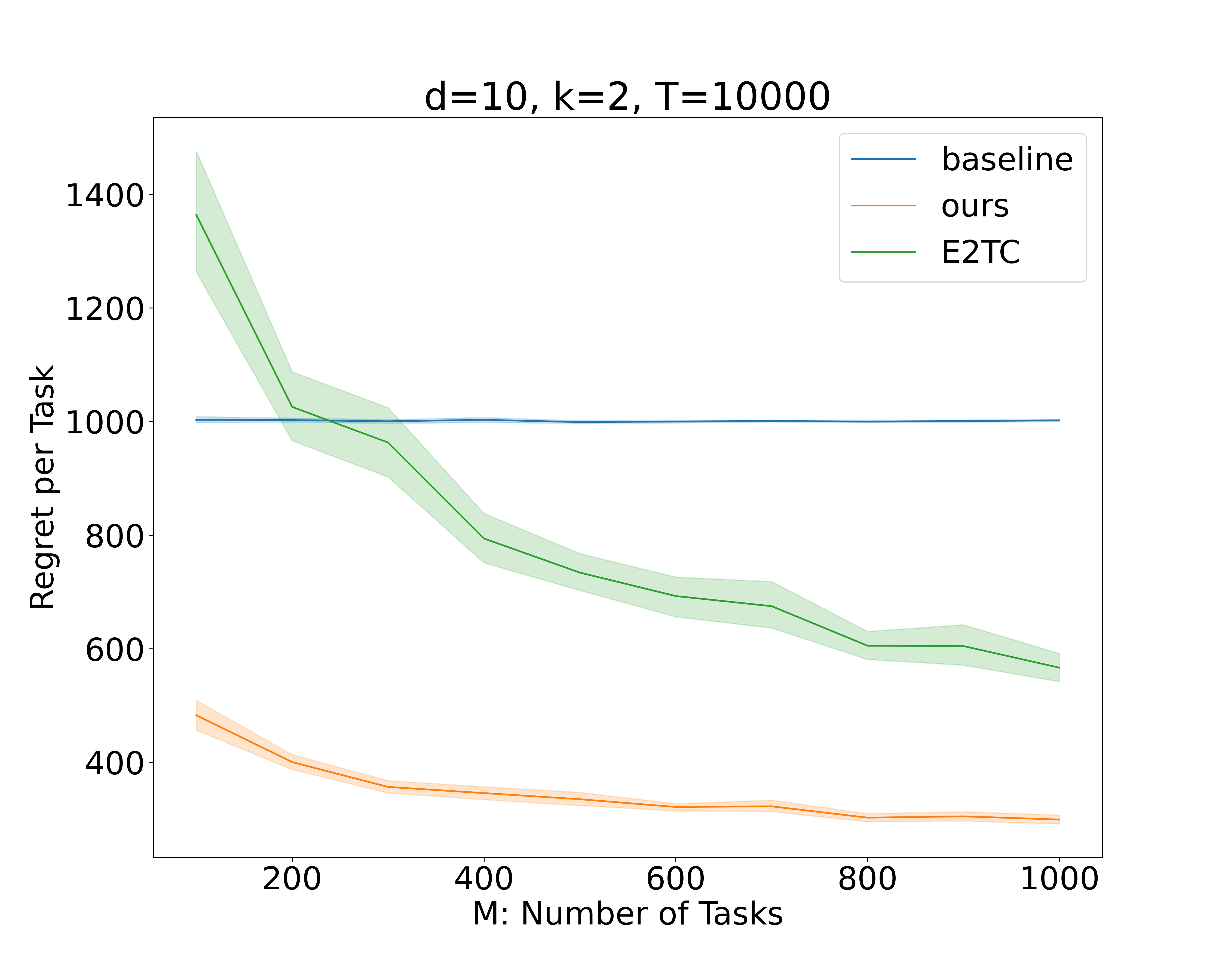}
	\end{minipage}
	\hspace{-1em}
	\begin{minipage}{0.33\textwidth}
		\centering
		\includegraphics[width=1.05\textwidth]{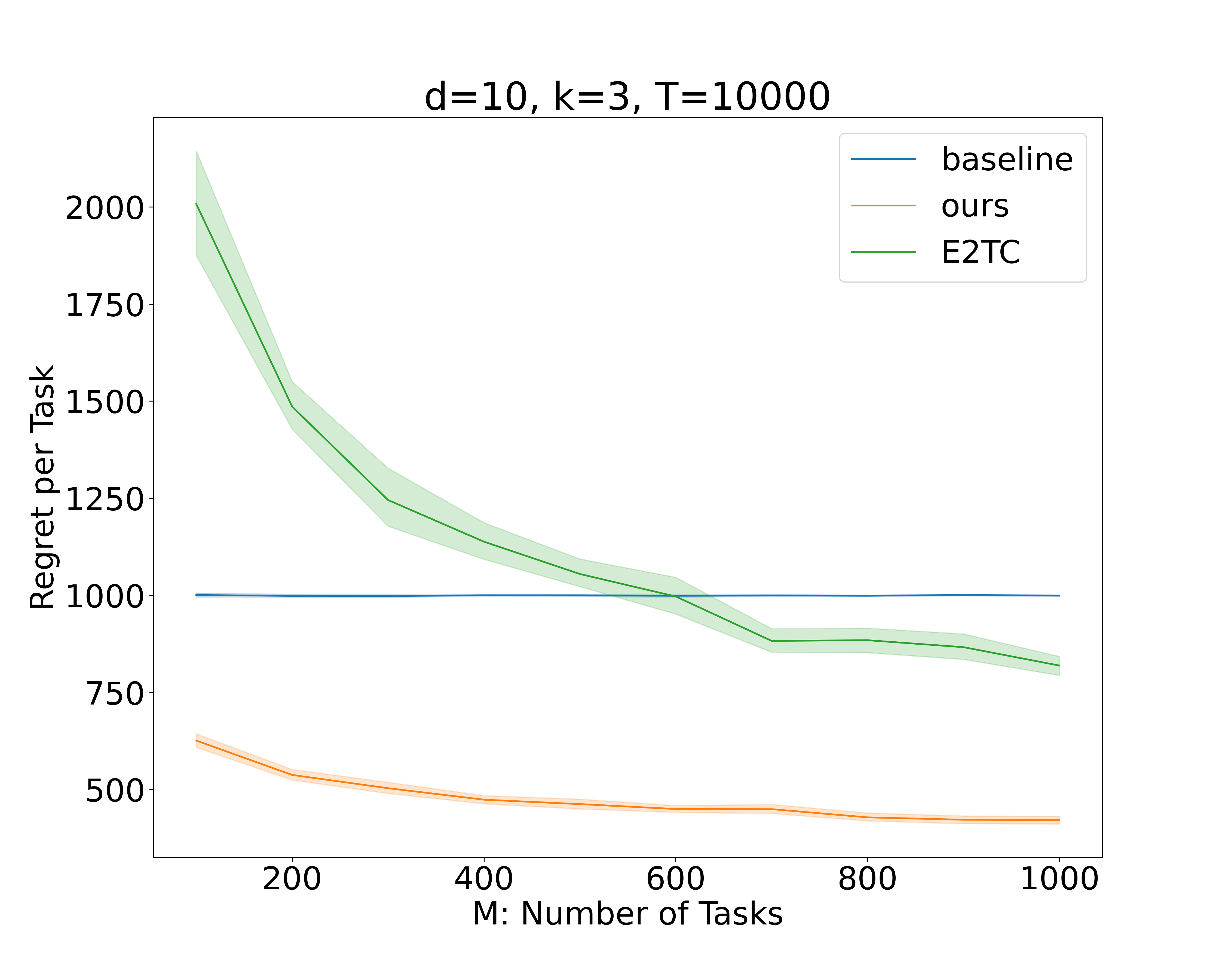}
	\end{minipage}
	\hspace{-1em}
	\begin{minipage}{0.33\textwidth}
		\centering
		\includegraphics[width=1.05\textwidth]{figs/multitask3.png}
	\end{minipage}
	\caption{
		\label{fig:multi}
		Comparisons of the E2TC algorithm in \citep{yang2021impact} and our \cref{alg:etc} for $k \in \{2, 3, 4\}$.}
\end{figure*}

\begin{figure*}[!t]
	\centering
	\begin{minipage}{0.33\textwidth}
		\centering
		\includegraphics[width=1.05\textwidth]{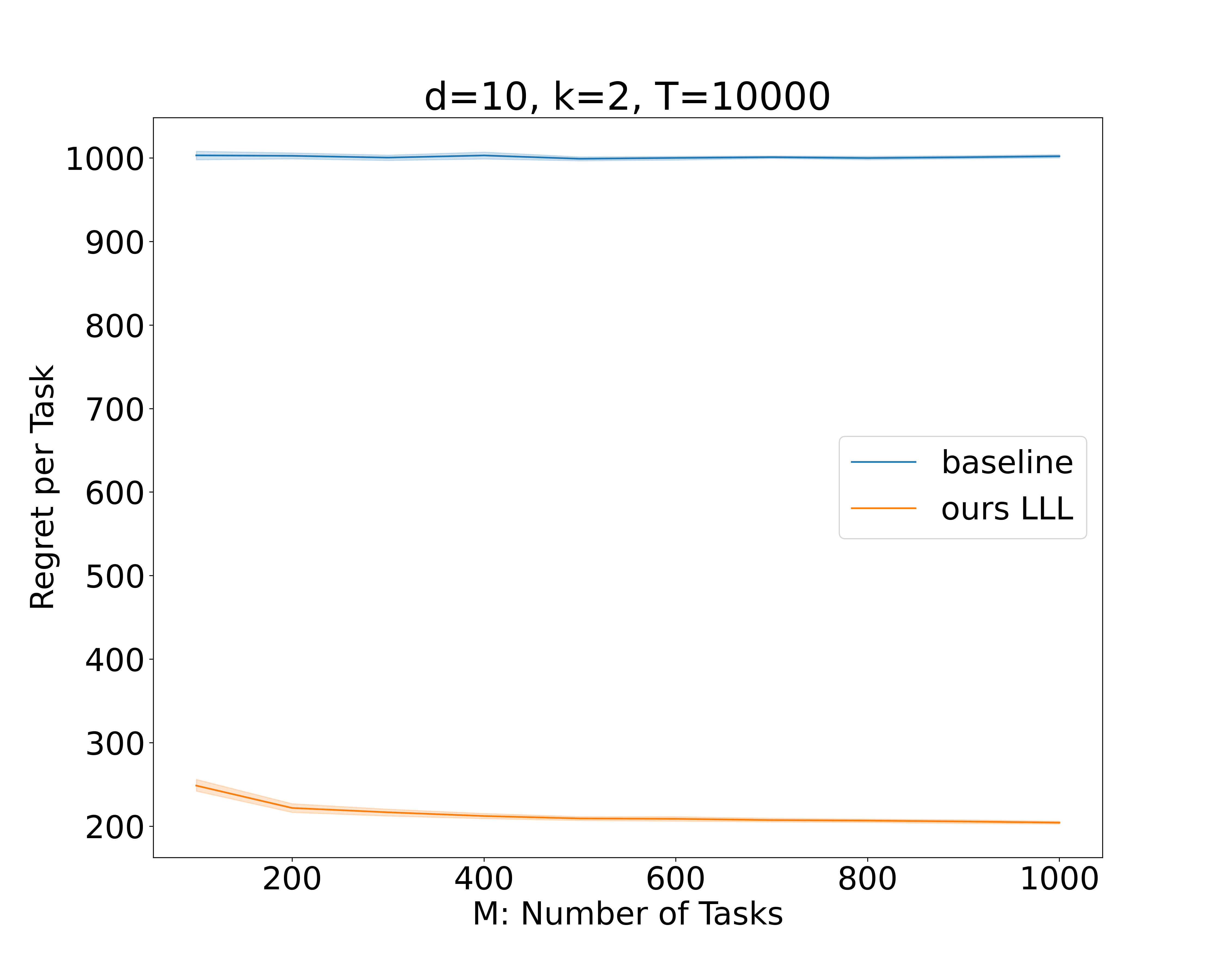}
	\end{minipage}
	\hspace{-1em}
	\begin{minipage}{0.33\textwidth}
		\centering
		\includegraphics[width=1.05\textwidth]{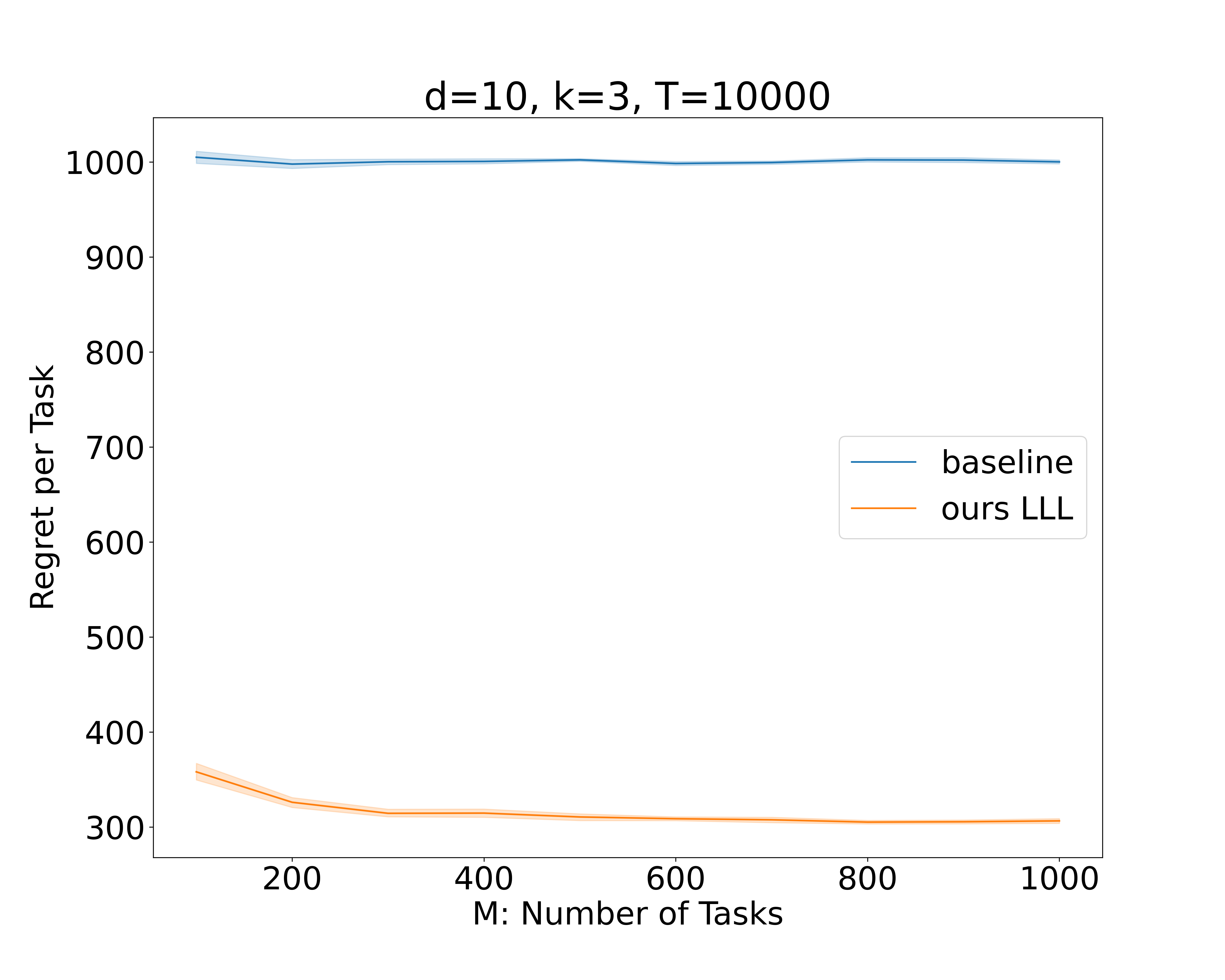}
	\end{minipage}
	\hspace{-1em}
	\begin{minipage}{0.33\textwidth}
		\centering
		\includegraphics[width=1.05\textwidth]{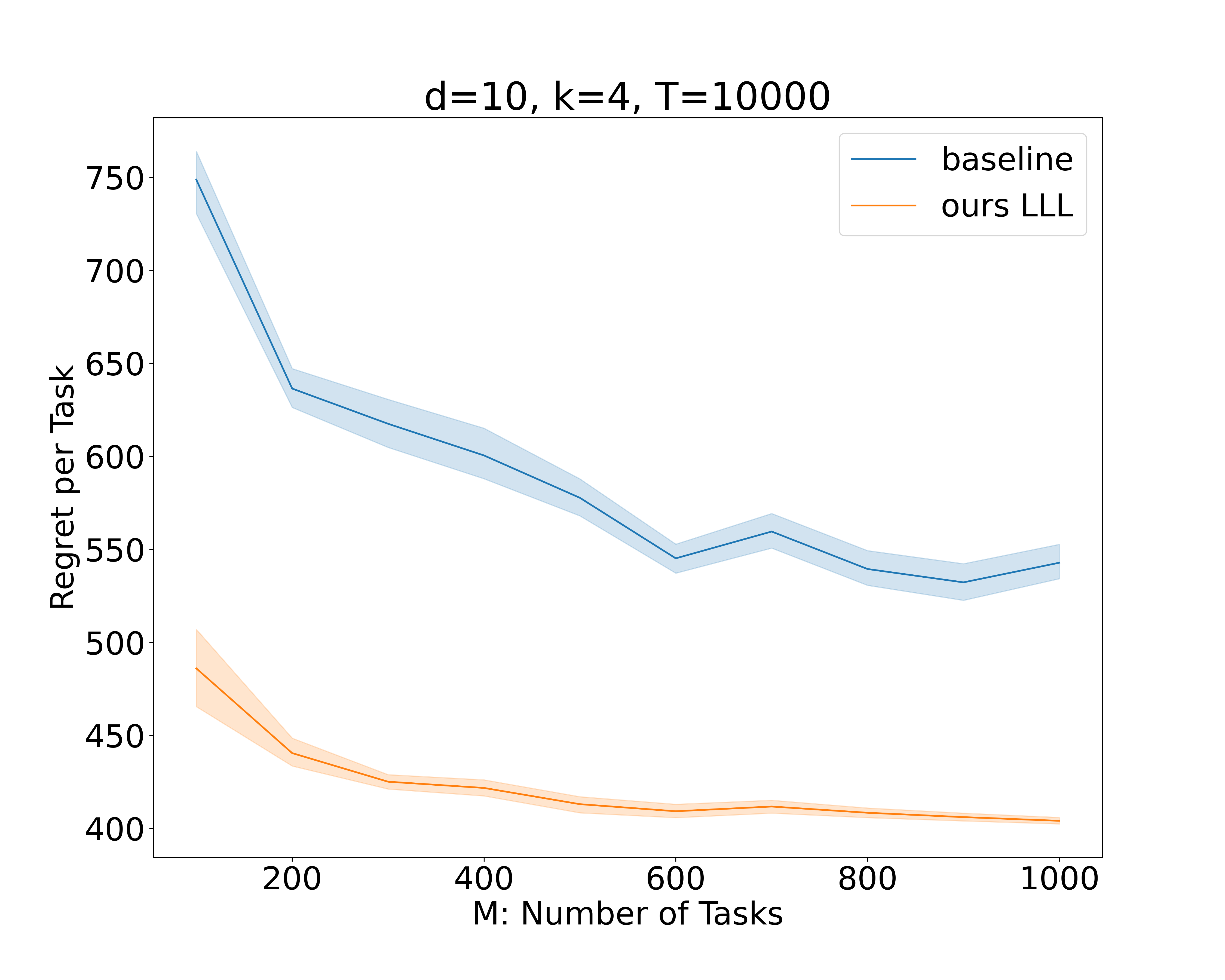}
	\end{minipage}
	\caption{
		\label{fig:lll-1}
		Comparisons of the naive baseline algorithm  and our \cref{alg:ll} for $k \in \{2, 3, 4\}$.}
\end{figure*}
\begin{figure*}[!t]
	\begin{minipage}{0.33\textwidth}
		\centering
		\includegraphics[width=1.05\textwidth]{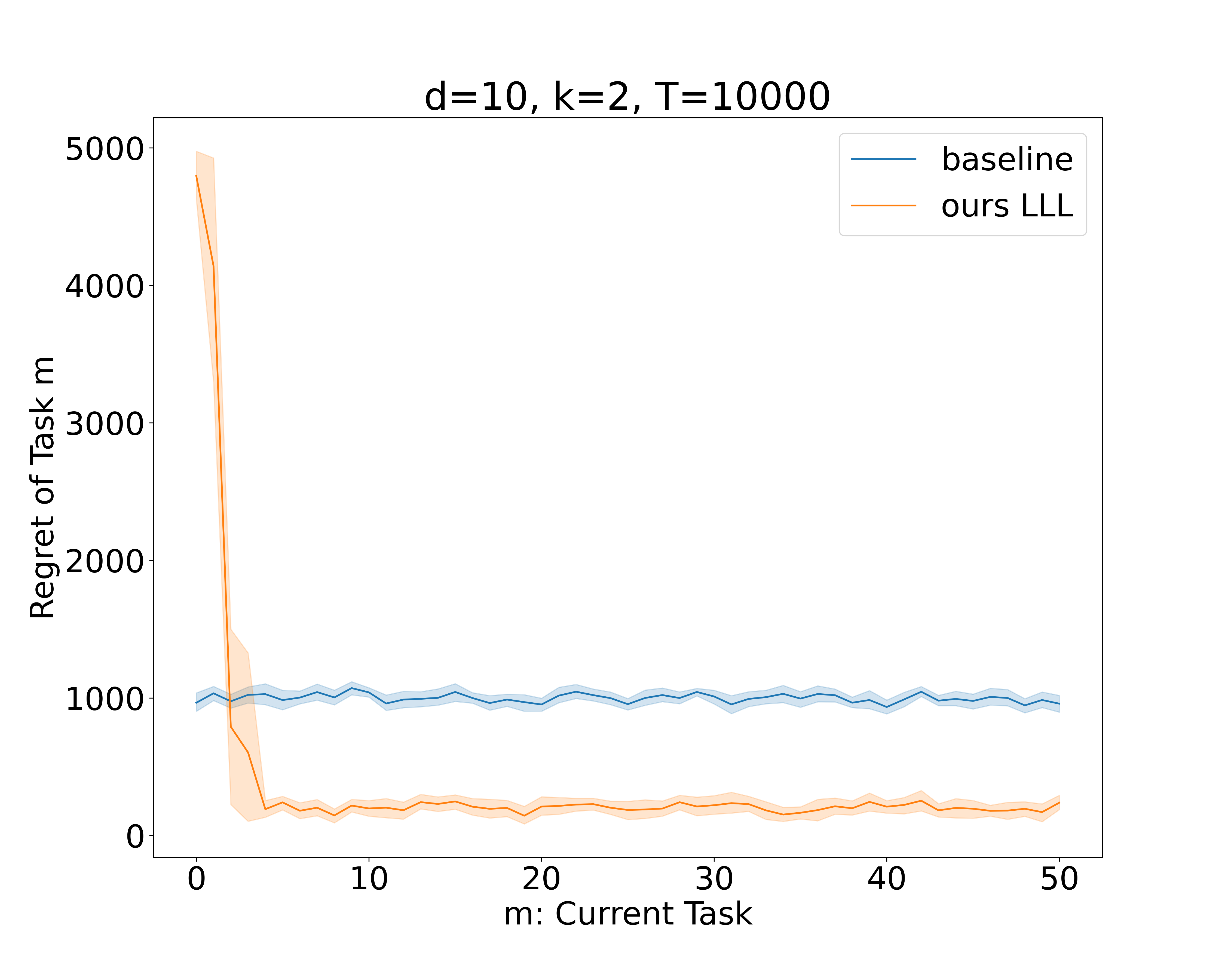}
	\end{minipage}
	\hspace{-1em}
	\begin{minipage}{0.33\textwidth}
		\centering
		\includegraphics[width=1.05\textwidth]{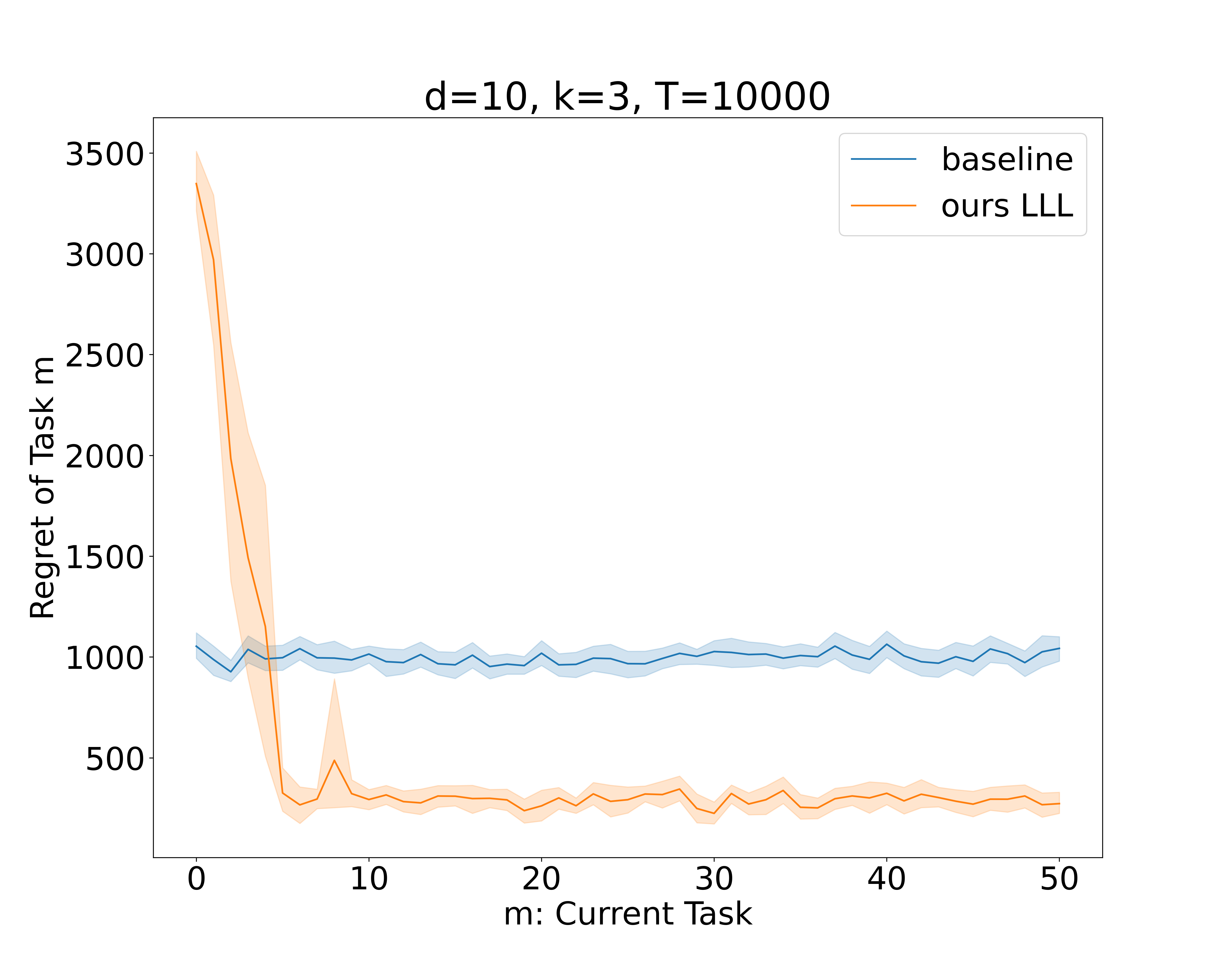}
	\end{minipage}
	\hspace{-1em}
	\begin{minipage}{0.33\textwidth}
		\centering
		\includegraphics[width=1.05\textwidth]{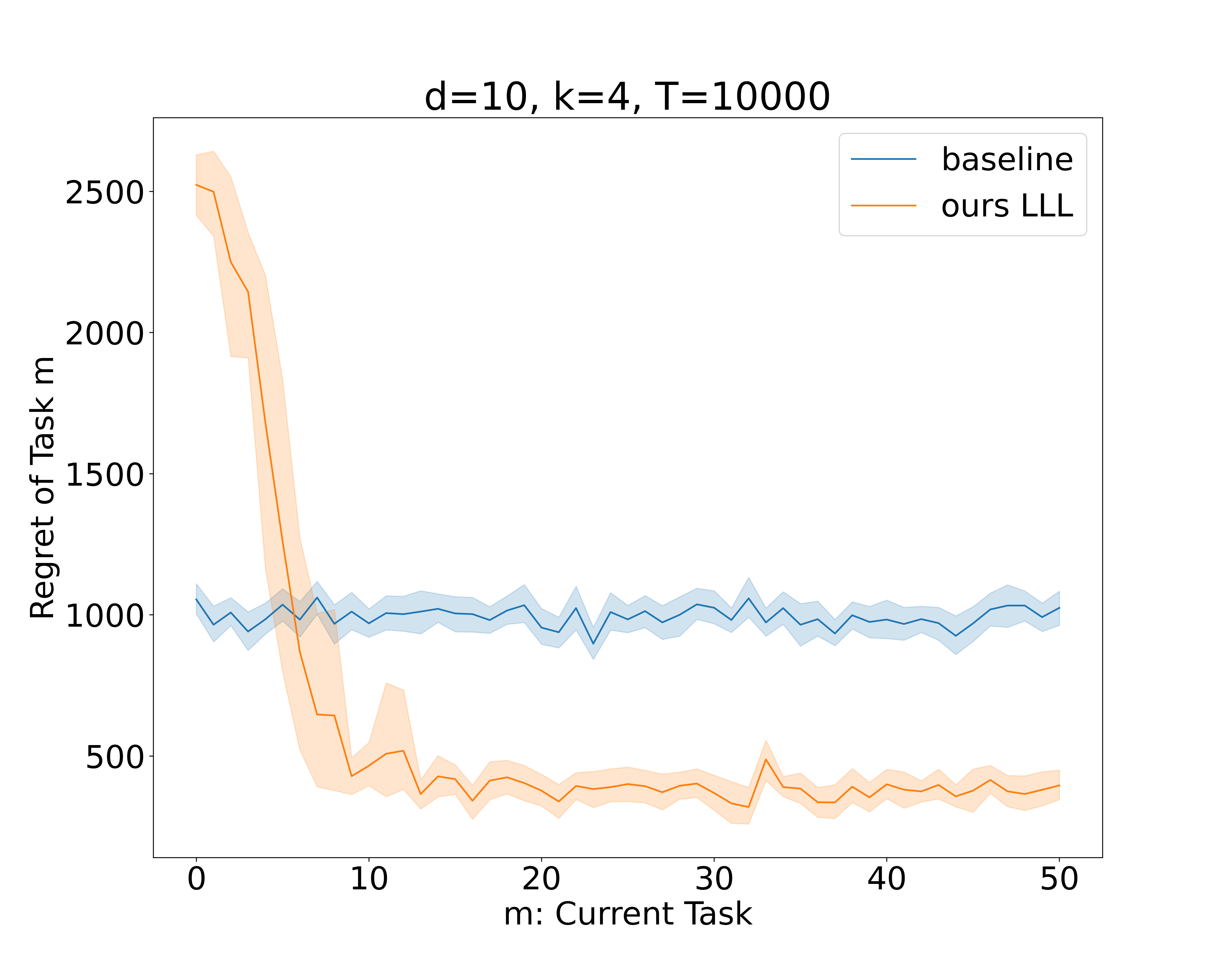}
	\end{minipage}
	\caption{Comparisons of the naive baseline algorithm and our \cref{alg:ll} for $k \in \{2, 3, 4\}$.
		\label{fig:lll-2}
	}
\end{figure*}

\paragraph{Setup.} We run experiments on synthetic data. Following \citep{yang2021impact}, we set $d=10, T=10^4$ and we consider $k = 2, 3, 4$ in our experiments. $\mB$ is generated by taking the first $k$ columns of a uniformly random orthonormal matrix, and each $\vw_m$ is generated uniformly from sphere. 

\paragraph{Results for the Multi-Task Setting.} We compare our algorithm with the baseline algorithm which treats each task independently, and the E2TC algorithm in \citep{yang2021impact}. The results are shown in Figure~\ref{fig:multi}.
For all settings, our algorithm achieves smaller regret, demonstrating the effectiveness of our new estimator.

\paragraph{Results for the Lifelong Setting.}  Since lifelong linear bandits is a new setting, we only compare our algorithm, LLL, with the naive baseline algorithm which  treats each task independently. In Figure~\ref{fig:lll-1}, we vary the number of total tasks and report the per task regret. We observe that our algorithm significantly outperforms the base and its per task regret decreases as we increase the number of tasks, which matches our theory. In Figure~\ref{fig:lll-2}, we report the regret on every task $m$ for in total $50$ tasks. We observe that for the initial tasks, we incur a higher regret than the baseline. However, the regret for later tasks is significantly smaller than the baseline. This  matches our theory which indicates that after learning a good low-dimensional representation in the beginning, it benefits the efficiency in later tasks and reduces the regret.

\section{Conclusion}
\label{sec:con}
In this paper, we gave near-optimal algorithms for multi-task and lifelong linear bandits with shared representation. 
A natural future direction is extend our algorithm to general representations, such as deep neural networks.
Another interesting direction is to design optimal algorithms for  multi-task reinforcement learning with shared representation.

\section*{Acknowledgments}

QL is supported by NSF 2030859 and the Computing Research Association for the CIFellows Project. JDL acknowledges support of the ARO under MURI Award W911NF-11-1-0304,  the Sloan Research Fellowship, NSF CCF 2002272, NSF IIS 2107304,  and an ONR Young Investigator Award. Simon S. Du gratefully acknowledges funding from NSF Award’s IIS-2110170 and DMS-2134106.

\bibliographystyle{plainnat}

\newpage
\appendix
\section{Technical Lemmas and Omitted Proofs}
The following matrix Bernstein inequality is a straightforward corollary of  Theorem 6.1.1 in \citep{tropp2015introduction}.

\begin{lem} \label{lem:matcon} Let $\mA_1, \cdots, \mA_n \in \sR^{d \times k}$ be a sequence of i.i.d. random matrices, such that $\E \mA_i = 0$ and $\norm{\mA_i} \le b$ for every $i \in [n]$. Let $\norm{\cdot}$ be the spectral norm of matrices. Let 
	\begin{align}
		v = \max_i \{\norm{ \E[\mA_i \mA_i^\top]}, \norm{\E[\mA_i^\top \mA_i]}\},
	\end{align}
	then  with probability $1 -O(\delta)$,
	\begin{align}
		\norm{\frac1n \sum_{i = 1}^n \mA_i } \le O(\sqrt{\frac{v \log \frac{d+k}{\delta}}{n}} + \frac{b\log \frac{d+k}{\delta}}{n}).
	\end{align}
\end{lem}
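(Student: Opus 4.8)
The plan is to obtain the stated bound as a direct specialization of the standard matrix Bernstein inequality (Theorem 6.1.1 of \citep{tropp2015introduction}), after normalizing the sum and computing its matrix variance. First I would set $\mS = \frac1n \sum_{i=1}^n \mA_i$ and regard it as a sum of the independent, mean-zero summands $\mA_i/n$, each bounded in spectral norm by $b/n$. The one preliminary computation is the matrix variance of $\mS$: since the $\mA_i$ are i.i.d.\ with $\E \mA_i = 0$, all cross terms vanish, so $\E[\mS \mS^\top] = \frac{1}{n^2}\sum_{i} \E[\mA_i \mA_i^\top] = \frac1n \E[\mA_1 \mA_1^\top]$, and likewise $\E[\mS^\top \mS] = \frac1n \E[\mA_1^\top \mA_1]$. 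Taking spectral norms and using the definition of $v$ yields $\max\{\norm{\E[\mS\mS^\top]}, \norm{\E[\mS^\top\mS]}\} \le v/n$.

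Next I would invoke Theorem 6.1.1 of \citep{tropp2015introduction} with the variance proxy $v/n$ and the uniform bound $b/n$ just computed, giving the tail estimate $\Pr[\norm{\mS} \ge t] \le (d+k)\exp\!\big(\frac{-t^2/2}{v/n + (b/n)\,t/3}\big)$. Setting the right-hand side equal to $\delta$ and solving for $t$ splits into the usual two regimes of the Bernstein bound: when the quadratic term in the exponent dominates one recovers the $\sqrt{v\log((d+k)/\delta)/n}$ contribution, and when the linear term dominates one recovers the $b\log((d+k)/\delta)/n$ contribution. Taking the maximum of the two (equivalently, their sum, up to a constant) produces exactly the claimed bound.

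This argument is entirely routine. The only points that need any care are the variance computation, where independence and mean-zeroness are used to discard cross terms while correctly tracking the $1/n$ scaling, and the inversion of the sub-exponential tail, which is the standard trick of balancing the Gaussian and exponential regimes. I do not anticipate any genuine obstacle, since the lemma is explicitly stated as a corollary of the cited theorem.
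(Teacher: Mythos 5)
Your proof is correct and takes exactly the route the paper intends: the paper states this lemma without proof, as a direct corollary of Theorem 6.1.1 of \citep{tropp2015introduction}, and your steps (identifying the summands $\mA_i/n$ with uniform bound $b/n$, computing the variance proxy $v/n$ via vanishing cross terms, and inverting the Bernstein tail into its two regimes) are precisely the standard derivation being left implicit. Nothing is missing.
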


\begin{lem}[\citet{hoeffding1963probability}] \label{lem:con} Let $X_1, \cdots, X_n$ be a sequence of i.i.d. random variables, such that $\E X_i = 0$ and $\abs{X_i} \le b$ for every $i \in [n]$. Then  with probability $1 -\delta$,
	\begin{align}
		\abs{\frac1n \sum_{i = 1}^n X_i } \le b\sqrt{\frac{\log \frac{2}{\delta}}{n}}.
	\end{align}
\end{lem}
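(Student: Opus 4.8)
The plan is to prove this via the classical Cram\'er--Chernoff (exponential moment) method, handling the two tails separately and combining them by a union bound. Write $S = \sum_{i=1}^n X_i$, so that the claim concerns $\Pr[\abs{S} \ge nt]$ with $t = b\sqrt{\log(2/\delta)/n}$.

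First I would control the upper tail. For any $s > 0$, Markov's inequality applied to the nonnegative variable $e^{sS}$ gives $\Pr[S \ge nt] \le e^{-snt}\,\E[e^{sS}]$. Because the $X_i$ are independent, the moment generating function factors as $\E[e^{sS}] = \prod_{i=1}^n \E[e^{sX_i}]$, so it suffices to bound each factor.

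The only substantive ingredient is the bound on $\E[e^{sX_i}]$, i.e.\ Hoeffding's lemma: for a mean-zero random variable supported in $[-b,b]$, convexity of $x \mapsto e^{sx}$ lets one dominate $e^{sx}$ on $[-b,b]$ by the chord through the endpoints, and a short computation (expanding the resulting log-MGF to second order, or optimizing it directly) yields $\E[e^{sX_i}] \le e^{s^2 b^2/2}$. Substituting, $\Pr[S \ge nt] \le \exp(-snt + n s^2 b^2/2)$, and choosing $s = t/b^2$ to minimize the exponent gives $\Pr[S \ge nt] \le \exp(-n t^2/(2b^2))$.

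Finally I would handle the lower tail by applying the identical argument to the variables $-X_i$, which are again mean-zero and bounded by $b$, obtaining $\Pr[S \le -nt] \le \exp(-n t^2/(2b^2))$. A union bound then yields $\Pr[\abs{S} \ge nt] \le 2\exp(-n t^2/(2b^2))$, and setting the right-hand side equal to $\delta$ and solving for $t$ gives $\abs{\frac1n \sum_i X_i} \le b\sqrt{2\log(2/\delta)/n}$ with probability $1-\delta$, matching the stated bound up to the universal constant (which is harmless as the lemma is only ever invoked inside $\tildeO(\cdot)$ estimates). The main---and essentially only---obstacle is establishing Hoeffding's lemma itself; the remaining Chernoff optimization and the symmetrization/union-bound step are routine.
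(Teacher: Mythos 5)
Your proof is correct and is precisely the classical argument behind this lemma: the paper gives no proof of its own, stating the result as a citation to \citet{hoeffding1963probability}, whose original derivation is exactly your exponential-moment (Chernoff) method --- Hoeffding's lemma $\E[e^{sX_i}] \le e^{s^2 b^2/2}$ via the chord bound, optimization over $s$, and a symmetrization/union bound for the two tails. Your remark about the constant is also accurate: the optimized argument yields $b\sqrt{2\log(2/\delta)/n}$, so the paper's statement as written is loose by a factor of $\sqrt{2}$, which is immaterial since the lemma is only ever invoked inside $O(\cdot)$ and $\tildeO(\cdot)$ estimates.
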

\begin{lem}[\citet{yang2021impact}, Lemma 17] \label{lem:curvature} Under Assumption ?, we have 
	\begin{align}
		1 - \langle \va_m, \vtheta_m\rangle \le \frac{\norm{\va_m - \vtheta_m}_2^2}{\norm{\vtheta_m}}.
	\end{align}
\end{lem}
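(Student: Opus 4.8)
The plan is to prove the inequality by a direct expansion, leaning on the two unit-norm facts that are in force here: the standing normalization $\norm{\vtheta_m} = 1$ from the problem setup, and the fact that the relevant action $\va_m$ lies on the unit \emph{sphere} rather than merely in the unit ball. This second property is the real content of the referenced assumption, and it is exactly what holds for the committed action, since maximizing a nonzero linear functional over the unit ball $\gA_m$ always returns a point of norm one. This is the only place the hypothesis enters, and it cannot be dropped: the inequality genuinely fails for actions strictly inside the ball, e.g. $\va_m = \tfrac12\vtheta_m$ gives left-hand side $\tfrac12$ but right-hand side $\tfrac14$.

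First I would record the exact curvature identity. Because $\norm{\vtheta_m} = 1$, the optimal unit action is $\va_m^\star = \vtheta_m$ with value $\langle \va_m^\star, \vtheta_m\rangle = 1$, so the left-hand side $1 - \langle \va_m, \vtheta_m\rangle$ is precisely the suboptimality gap of $\va_m$. Expanding the squared distance and substituting $\norm{\va_m} = \norm{\vtheta_m} = 1$ yields the polarization identity
\begin{align}
	\norm{\va_m - \vtheta_m}^2 = \norm{\va_m}^2 + \norm{\vtheta_m}^2 - 2\langle \va_m, \vtheta_m\rangle = 2\bigl(1 - \langle \va_m, \vtheta_m\rangle\bigr),
\end{align}
so that the gap equals exactly one half of the squared distance to the optimum. (For a general coefficient one would instead use $\norm{\vtheta_m} - \langle \va_m, \vtheta_m\rangle = \tfrac{\norm{\vtheta_m}}{2}\,\norm{\va_m - \vtheta_m/\norm{\vtheta_m}}^2$; under the standing assumption $\norm{\vtheta_m}=1$ this collapses to the display above.)

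Finally I would weaken this equality into the stated bound: since $1 - \langle \va_m, \vtheta_m\rangle = \tfrac12\norm{\va_m - \vtheta_m}^2 \le \norm{\va_m - \vtheta_m}^2$ and $\norm{\vtheta_m} = 1$, the right-hand side equals $\norm{\va_m - \vtheta_m}^2/\norm{\vtheta_m}$ and the claim follows. There is no analytic obstacle here; the only step warranting care is the very first one, namely confirming $\norm{\va_m} = 1$. The factor $1/\norm{\vtheta_m}$ and the slack of two between $\tfrac12$ and $1$ are deliberately loose, so that this same statement survives the subsequent normalization step (passing from $\hatvtheta_m/\norm{\hatvtheta_m}$ back to the raw estimate $\hatvtheta_m$) used to obtain \eqref{eq:reg-800}.
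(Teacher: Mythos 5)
Your proof is correct. The paper never proves this lemma itself---it is imported from \citet{yang2021impact} (Lemma 17) with only the remark that the proof is ``routine linear algebra''---and your polarization argument is exactly that routine argument: with $\norm{\vtheta_m}=1$ and $\norm{\va_m}=1$, the identity $\norm{\va_m-\vtheta_m}^2=2\bigl(1-\langle\va_m,\vtheta_m\rangle\bigr)$ gives the claim with a factor-two slack. You also correctly resolve the paper's dangling ``Assumption ?'': what is needed is precisely $\norm{\vtheta_m}=1$ (the standing normalization of the task coefficients) together with $\norm{\va_m}=1$, which holds for the committed action since the maximizer of a nonzero linear functional over the unit ball lies on the sphere; your counterexample $\va_m=\tfrac12\vtheta_m$ shows the unit-norm hypothesis on $\va_m$ cannot be dropped. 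One peripheral remark is slightly off, though it does not affect the lemma: the factor $2$ in \eqref{eq:reg-800} does not come from the slack in the stated inequality. Chaining the stated bound $1-\langle\va_m,\vtheta_m\rangle\le\norm{\va_m-\vtheta_m}^2$ with the normalization estimate $\norm{\hatvtheta_m/\norm{\hatvtheta_m}-\vtheta_m}\le 2\norm{\hatvtheta_m-\vtheta_m}$ would yield a factor $4$; obtaining the factor $2$ requires the tight equality form $1-\langle\va_m,\vtheta_m\rangle=\tfrac12\norm{\va_m-\vtheta_m}^2$, which is what your argument actually establishes before you discard the half.
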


\begin{lem}[\citet{yang2021impact}, Lemma 18 applied to our setting] \label{lem:theta_decomp}$\E\norm{\hatvtheta_m - \vtheta_m}^2 \le O\left(\frac{k^2}{T_2}\right) + \norm{\hatmB^\top_\perp \mB}^2$.
\end{lem}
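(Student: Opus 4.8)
The plan is to compute the Stage-2 estimator $\hatvtheta_m = \hatmB\hatvw_m$ (with $\hatvw_m := \hatmW(m)$) in closed form, exploiting that the exploration actions in Stage 2 are the orthonormal columns of $\hatmB$. First I would observe that for each coordinate $i \in [k]$, the block of $b = T_2/k$ rounds that play $\va_{t, m} = \hatmB(i)$ produces rewards $r_{t, m} = \langle \hatmB(i), \vtheta_m \rangle + \eta_{t, m}$. Because $\hatmB^\top \hatmB = \mI$, the regression predictor satisfies $\langle \hatmB(i), \hatmB \vw \rangle = \vw(i)$, so the least-squares objective decouples across coordinates and its minimizer is simply the per-block empirical mean: $\hatvw_m(i) = \hatmB(i)^\top \vtheta_m + \bar\eta_i$, where $\bar\eta_i$ is the average of $b$ i.i.d.\ $\gN(0, 1)$ noises and hence $\bar\eta_i \sim \gN(0, k/T_2)$, independent across $i$. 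Collecting these, $\hatvw_m = \hatmB^\top \vtheta_m + \vxi$ with $\vxi = (\bar\eta_1, \dots, \bar\eta_k)^\top$, so that $\hatvtheta_m = \hatmB \hatvw_m = \mP_{\hatmB} \vtheta_m + \hatmB \vxi$.

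Next I would decompose the error. Writing $\hatvtheta_m - \vtheta_m = -\mP_{\hatmB}^\perp \vtheta_m + \hatmB \vxi$, the first term lies in the orthogonal complement of the column span of $\hatmB$ while the second lies in that span, so they are orthogonal and the Pythagorean identity gives
\begin{align}
	\norm{\hatvtheta_m - \vtheta_m}^2 = \norm{\mP_{\hatmB}^\perp \vtheta_m}^2 + \norm{\hatmB \vxi}^2 = \norm{\mP_{\hatmB}^\perp \vtheta_m}^2 + \norm{\vxi}^2,
\end{align}
where the last equality again uses $\hatmB^\top \hatmB = \mI$.

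It then remains to bound the two terms, everything conditioned on the Stage-1 output $\hatmB$ (the Stage-2 noise is independent of $\hatmB$). For the bias term, since $\vtheta_m = \mB \vw_m$ with $\norm{\vw_m} = \norm{\vtheta_m} = 1$ and $\mP_{\hatmB}^\perp \mB = \hatmB_\perp \hatmB_\perp^\top \mB$, I get $\norm{\mP_{\hatmB}^\perp \vtheta_m} \le \norm{\hatmB_\perp^\top \mB}\,\norm{\vw_m} \le \norm{\hatmB_\perp^\top \mB}$. For the variance term, taking expectation over the Stage-2 noise, $\E \norm{\vxi}^2 = \sum_{i=1}^k \E \bar\eta_i^2 = k \cdot \frac{k}{T_2} = \frac{k^2}{T_2}$. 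Combining the two bounds yields the claim. The only genuinely delicate points are keeping the conditioning straight—so that $\hatmB$ is treated as fixed when invoking independence of the Stage-2 noise—and justifying the coordinate-wise decoupling of the regression, which is exactly where orthonormality of $\hatmB$ is used; both become routine once the closed form of $\hatvw_m$ is written down.
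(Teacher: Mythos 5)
Your proof is correct and is essentially the argument the paper intends: the paper does not actually prove Lemma~\ref{lem:theta_decomp} itself, but imports it from \citet{yang2021impact} (Lemma 18), remarking only that the proof is routine linear algebra. Your closed-form least-squares computation (using orthonormality of $\hatmB$ to decouple the regression), the Pythagorean bias--variance split with bias $\norm{\mP_{\hatmB}^\perp \vtheta_m} \le \norm{\hatmB_\perp^\top \mB}$ and variance $k \cdot (k/T_2) = k^2/T_2$, conditioning on $\hatmB$ throughout, is precisely that routine argument carried out correctly, and in fact yields the bound with constant $1$ rather than just $O(\cdot)$.
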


\begin{lem}[Elliptical potential lemma, \citet{abbasi2011improved}, Lemma 11] \label{lem:ellip} Let $\vx_1, \cdots, \vx_n \in \sR^d$ be a sequence of vectors and let $\mLambda_i = \lambda \mI + \sum_{j = 1}^i \vx_j \vx_j^\top$. Assume $\norm{\vx_i} \le \lambda$. Then 
	\begin{align}
		\sum_{i = 1}^n \min\{1, \norm{\vx_i}_{\mLambda_{i-1}^{-1}}^2\} \le 2 \ln \frac{\det \mLambda_n}{\det \mLambda_0} \le 2 d\ln \frac{\tr \mLambda_n + n}{\lambda}.
	\end{align}
\end{lem}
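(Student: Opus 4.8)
The plan is to exploit the rank-one update structure of $\mLambda_i$ to collapse the sum into a log-determinant ratio, and then bound that ratio. First I would record the per-step determinant identity. Because $\mLambda_i = \mLambda_{i-1} + \vx_i \vx_i^\top$ is a rank-one perturbation of the positive definite matrix $\mLambda_{i-1}$, the matrix determinant lemma yields
\begin{align}
	\det \mLambda_i = \det \mLambda_{i-1} \left(1 + \vx_i^\top \mLambda_{i-1}^{-1} \vx_i\right) = \det \mLambda_{i-1}\left(1 + \norm{\vx_i}_{\mLambda_{i-1}^{-1}}^2\right).
\end{align}
Taking logarithms and summing over $i = 1, \ldots, n$ makes the right-hand side telescope, giving
\begin{align}
	\sum_{i=1}^n \ln\!\left(1 + \norm{\vx_i}_{\mLambda_{i-1}^{-1}}^2\right) = \ln \frac{\det \mLambda_n}{\det \mLambda_0}.
\end{align}

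To pass from the logarithmic summand to the truncated summand $\min\{1, \norm{\vx_i}_{\mLambda_{i-1}^{-1}}^2\}$, I would invoke the elementary scalar inequality $\min\{1, z\} \le 2\ln(1+z)$, valid for all $z \ge 0$. For $z \in [0,1]$ this follows from concavity of $z \mapsto \ln(1+z)$, which lies above its chord and hence satisfies $\ln(1+z) \ge z\ln 2 \ge z/2$; for $z \ge 1$ it follows from $2\ln(1+z) \ge 2\ln 2 \ge 1$. Applying this termwise with $z = \norm{\vx_i}_{\mLambda_{i-1}^{-1}}^2$ and combining with the telescoped identity produces the first claimed inequality. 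For the second, I would use $\det \mLambda_0 = \det(\lambda \mI) = \lambda^d$ together with the AM--GM bound $\det \mLambda_n \le \left(\tr \mLambda_n / d\right)^d$ on the eigenvalues of $\mLambda_n$, so that $\ln(\det \mLambda_n / \det \mLambda_0) \le d \ln(\tr \mLambda_n / (d\lambda)) \le d \ln((\tr \mLambda_n + n)/\lambda)$, where the last step uses $d \ge 1$ and $n \ge 0$.

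The main obstacle here is essentially nil: the statement is a standard result quoted from \citet{abbasi2011improved}, and the only two nonroutine ingredients --- the rank-one determinant identity and the scalar bound $\min\{1,z\} \le 2\ln(1+z)$ --- are both elementary. If anything, the one point requiring mild care is verifying the scalar inequality across both regimes $z \le 1$ and $z \ge 1$; I would also note in passing that the stated hypothesis $\norm{\vx_i} \le \lambda$ is not actually invoked in either displayed bound, so the argument goes through for arbitrary $\vx_i$.
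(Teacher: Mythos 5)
Your proof is correct, and it is essentially the standard argument behind this lemma: the paper itself states it without proof as an imported result from \citet{abbasi2011improved}, whose proof proceeds exactly as you do, via the rank-one determinant identity, telescoping of log-determinants, the scalar bound $\min\{1,z\}\le 2\ln(1+z)$, and AM--GM on the eigenvalues for the trace bound. Your side remark that the hypothesis $\norm{\vx_i}\le\lambda$ is never used is also accurate for the truncated form stated here.
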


\begin{proof}[Proof of Theorem~\ref{thm:lllreg}] By \cref{thm:lll-pe}, the regret can be bounded by
	\begin{align}
		&\quad \tildeO(\frac{d^2 k + k^2 M}{\epsilon^2}) + \sum_{m = 1}^M \sum_{t=1}^T \max_{\va \in \gA} \langle \va - \va_{t, m}, \vtheta_m \rangle \notag \\ 
		& \le \tildeO(\frac{d^2 k + k^2 M}{\epsilon^2}) + \sum_{m = 1}^M T \cdot O(\norm{\hatvtheta_m - \vtheta_m}^2) \\ 
		&\le \tildeO(\frac{d^2 k + k^2 M}{\epsilon^2}) + O(MT \epsilon^2) \\ 
		&\le \tildeO(d\sqrt{kMT} + kM \sqrt T ), \label{eq:lll-reg-2}
	\end{align}
	where (\ref{eq:lll-reg-2}) is by choosing the optimal $\epsilon^2 = \tildeO(\sqrt{(d^2k+k^2M)/ MT})$.
\end{proof}

\begin{proof}[Proof of Lemma~\ref{lem:lll-con1}] First, we note that $\hatmB_m$ always have orthogonal columns by induction, because by \eqref{eq:alg-lll-1}, the last column $\hatvb_{\tau_m} \propto \mP^\perp_{\hatmB_{m-1}}\hatvtheta_m$ is always orthogonal to the first $\tau_m -1 $ columns, which is $\hatmB_{m-1}$.

	Next, note that for $t \in [(j-1) n_{m,1} + 1, j n_{m, 1}]$, we have $\E[r_{t, m}] = \langle \hatmB_{m-1}(j), \vtheta_m \rangle$. Therefore, conditioned on $\tau_{m-1}$, by \cref{lem:con}, for each $j \in [\tau_{m-1}]$, with probability $1 - \frac{\delta}{dM}$, 
	\begin{align}
		\abs{\tildevw_m(j) - \langle \hatmB_{m-1}(j), \vtheta_m \rangle} \le \sqrt{\frac{\log \frac{2dM}{\delta}}{n_{m, 1}}}. \label{eq:lll-1-1}
	\end{align}
	By a union bound over $j \in [\tau_{m-1}]$, with probability $1 - \frac{\delta}{M}$, (\ref{eq:lll-1-1}) holds for every $j \in [\tau_{m-1}]$. Note that $\langle \tildevtheta_m, \hatmB_{m-1}(j) \rangle = \tildevw_m(j)$, so with probability $1 - \frac{\delta}{M}$, 
	\begin{align}
		&\quad \norm{\tildevtheta_m - \mP_{\hatmB_{m-1}}\vtheta_m} \notag \\
		&= \sqrt{\sum_{j = 1}^{\tau_{m-1}} \langle \hatmB_{m-1}(j), \tildevtheta_m - \vtheta_{m} \rangle^2 } \label{eq:lll-1-1-1} \\ 
		&\le \sqrt{\sum_{j = 1}^{\tau_{m-1}} \frac{\log \frac{2dM}{\delta}}{n_{m, 1}}} \label{eq:lll-1-2} \\
		&\le \sqrt{\frac{\tau_{m-1}\log\frac{2dM}{\delta}}{n_{m,1}}}, \label{eq:lll-1-3}
	\end{align} 
	where (\ref{eq:lll-1-1-1}) uses that $\hatmB_{m-1}$ has orthogonal columns. Finally, we conclude by a union bound over $m \in [M]$. 
\end{proof}

\begin{proof}[Proof of Lemma~\ref{lem:lll-2}] We have
	\begin{align}
		&\quad \norm{\mP_{\hatmB_{m-1}}^\perp \vtheta_{m}} \notag \\
		&= \norm{\vtheta_{m} - \mP_{\hatmB_{m-1}} \vtheta_{m}} \\ 
		&\ge \norm{\vtheta_{m}} - \norm{\mP_{\hatmB_{m-1}} \vtheta_{m}} \\ 
		&\ge 1 - (\norm{\tildevtheta_m} + \norm{\tildevtheta_m - \mP_{\hatmB_{m-1}}\vtheta_m}) \label{eq:lll-2-1-1} \\ 
		&\ge \frac\epsilon2, \label{eq:lll-2-1-2}
	\end{align}
	where \eqref{eq:lll-2-1-2} is by \cref{lem:lll-con1} and by plugging in $n_{m,1}$.
\end{proof}

\begin{proof}[Proof of Lemma~\ref{lem:lll-con-2}] Note that $\nu_\ell = m$ and $\tau_m = \ell$. By \cref{lem:con}, for each $j \in [d]$, condition on $\nu_\ell = m$, with probability $1 - \frac{\delta}{dM}$, 
	\begin{align}
		\abs{ \hatvtheta_m(j) - \vtheta_m(j)} \le \sqrt{\frac{\log \frac{2dM}{\delta}}{n_{m, 2}}}. \label{eq:lll-2-1}
	\end{align}
	By a union bound over $j \in [d]$, with probability $1 - \frac{\delta}{M}$, (\ref{eq:lll-2-1}) holds for every $j \in [d]$. Then  
	\begin{align}
		\norm{\vtheta_m - \hatvtheta_m} &= \sqrt{\sum_{j = 1}^{d} (\hatvtheta_m(j) - \vtheta_m(j))^2 } \\ 
		&\le \sqrt{\sum_{j = 1}^{d} \frac{\log \frac{2dM}{\delta}}{n_{m, 2}}}  \label{eq:lll-2-2} \\
		&= \sqrt{\frac{d\log\frac{2dM}{\delta}}{n_{m,2}}}, \label{eq:lll-2-3}
	\end{align}  Furthermore, because $\mP_{\mB} \vtheta_m = \vtheta_m$, we have $\norm{\vtheta_m - \hatvb_{\tau_m}} = \norm{\mP_{\mB}(\vtheta_m - \hatvtheta_m)} \le \norm{\vtheta_m - \hatvtheta_m}$. Finally, we note that 
	\begin{align}
		\norm{\vd_{\tau_m}} &= \norm{\mP^\perp_{\mB}\hatvtheta_m} \notag \\
		&= \norm{\mP^\perp_{\mB}(\hatvtheta_m - \vtheta_{m})} \le \norm{\hatvtheta_m - \vtheta_{m}},
	\end{align}
	and we conclude by a union bound over $\ell \le M$. 
\end{proof}

\begin{lem} \label{lem:lll-con-3} Let $\kappa = d \log \frac{d}{\delta}$. Define the event $\gE_{3} = \{\mD_{\kappa}^\top \mD_\kappa \preccurlyeq \epsilon^2 \mI\}$. Then $\Pr[\gE_3] \ge 1 - \delta$.
\end{lem}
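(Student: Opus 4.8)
The plan is to identify each column of $\mD_\kappa$ as a \emph{projected Gaussian noise vector} and then bound the spectral norm of the resulting Gram matrix. Writing $m=\nu_\ell$ for the $\ell$-th task that enters Stage~2, the rough estimate there is $\hatvtheta_m=\vtheta_m+\veta_m$, where $\veta_m(j)=\frac1{n_{m,2}}\sum_t\eta_{t,m}$ averages $n_{m,2}$ fresh i.i.d.\ $\gN(0,1)$ noises over disjoint time steps; since distinct coordinates are explored at disjoint steps, $\veta_m\sim\gN(0,\tfrac1{n_{m,2}}\mI)$. As $\mP_\mB\vtheta_m=\vtheta_m$, we get $\vd_\ell=\mP_\mB^\perp\hatvtheta_m=\mP_\mB^\perp\veta_m\sim\gN(0,\tfrac1{n_{m,2}}\mP_\mB^\perp)$, a centered Gaussian supported on the $(d-k)$-dimensional range of $\mP_\mB^\perp$. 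Crucially $n_{m,2}=\ceil{\frac{16d\log\frac2\delta}{\epsilon^2}}$ does not depend on $m$; write $n_2$ for this common value.

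First I would reduce the PSD claim to an operator-norm bound: since the nonzero spectra of $\mD_\kappa^\top\mD_\kappa$ and $\mD_\kappa\mD_\kappa^\top=\sum_{\ell=1}^\kappa\vd_\ell\vd_\ell^\top$ coincide, $\mD_\kappa^\top\mD_\kappa\preccurlyeq\epsilon^2\mI$ is equivalent to $\norm{\sum_{\ell=1}^\kappa\vd_\ell\vd_\ell^\top}\le\epsilon^2$. Let $\gF_{\ell-1}$ be the history just before Stage~2 of task $\nu_\ell$. Then $\E[\vd_\ell\vd_\ell^\top\mid\gF_{\ell-1}]=\tfrac1{n_2}\mP_\mB^\perp$, so the conditional mean of the sum is $\tfrac{\kappa}{n_2}\mP_\mB^\perp$, whose norm equals $\tfrac{\kappa}{n_2}=\tfrac{\epsilon^2}{16}\cdot\tfrac{\log(d/\delta)}{\log(2/\delta)}$. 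The count $\kappa=d\log\frac d\delta$ and the generous constant $16$ in $n_2$ are chosen precisely so this mean term stays below, say, $\epsilon^2/4$.

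It then remains to show the fluctuation $\norm{\sum_\ell(\vd_\ell\vd_\ell^\top-\tfrac1{n_2}\mP_\mB^\perp)}$ is lower order. I would apply the matrix Bernstein bound of \cref{lem:matcon} (in its martingale/Freedman form, also in \citep{tropp2015introduction}) to the conditionally centered increments. For the almost-sure norm parameter I would condition on the event $\gE_2$ of \cref{lem:lll-con-2}, which gives $\norm{\vd_\ell}^2\le\norm{\hatvtheta_m-\vtheta_m}^2\le\tfrac{d\log(2dM/\delta)}{n_2}=O(\epsilon^2\,\mathrm{polylog})$, so each increment has norm $b=O(\epsilon^2\,\mathrm{polylog})$; for the variance I would use the Gaussian fourth-moment identity $\E[\norm{\vd_\ell}^2\vd_\ell\vd_\ell^\top]=\tfrac1{n_2^2}(\tr\mP_\mB^\perp+2)\mP_\mB^\perp$ to get $\sum_\ell\E[(\vd_\ell\vd_\ell^\top)^2\mid\gF_{\ell-1}]\preccurlyeq\tfrac{(d-k+2)\kappa}{n_2^2}\mP_\mB^\perp$, i.e.\ $v=O(\epsilon^4\,\mathrm{polylog})$. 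The resulting $\sqrt{v\log(d/\delta)}+b\log(d/\delta)$ deviation is then a sub-$\epsilon^2$ multiple of $\epsilon^2$, so mean plus fluctuation is at most $\epsilon^2$ with probability $1-\delta$.

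The main obstacle is the \textbf{adaptivity}: the index $\nu_\ell$ of the $\ell$-th Stage-2 task, and even whether $\kappa$ such tasks occur, are determined by past rewards, so $\{\vd_\ell\}$ is a filtered rather than i.i.d.\ sequence and \cref{lem:matcon} does not apply verbatim. The resolution is that the Stage-2 noise of task $\nu_\ell$ is drawn \emph{after} $\hatmB_{m-1}$ and the entry decision are fixed, hence is independent of $\gF_{\ell-1}$; this makes $\vd_\ell\vd_\ell^\top-\E[\vd_\ell\vd_\ell^\top\mid\gF_{\ell-1}]$ a matrix-martingale difference, and the Freedman version of matrix Bernstein applies with the conditional variance above. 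Treating $\kappa$ as a deterministic number of columns is harmless: if fewer than $\kappa$ tasks enter Stage~2 we simply pad with fresh independent $\gN(0,\tfrac1{n_2}\mP_\mB^\perp)$ draws, which only enlarges $\mD_\kappa\mD_\kappa^\top$. A secondary nuisance is keeping the polylogarithmic factors in $b$ and $v$ below the $\epsilon^2$ threshold, which is exactly why the constant in $n_{m,2}$ is taken large.
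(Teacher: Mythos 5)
Your proposal follows essentially the same route as the paper's proof: view $\vd_1,\ldots,\vd_\kappa$ as a sequence adapted to the filtration generated by the algorithm's history, observe that each $\vd_\ell$ is conditionally Gaussian with covariance of order $\frac{1}{n_{m,2}}$, and apply a matrix Bernstein--type concentration bound to $\sum_{\ell}\vd_\ell\vd_\ell^\top$. If anything, your write-up is more careful than the paper's terse argument: you use the correct conditional covariance $\frac{1}{n_{m,2}}\mP_{\mB}^{\perp}$ (the paper writes $\frac{1}{n_{m,2}}\mI$), you explicitly invoke the martingale (Freedman) form of matrix Bernstein rather than applying the i.i.d.\ statement of Lemma~\ref{lem:matcon} verbatim to an adaptively generated sequence, and you handle the case where fewer than $\kappa$ tasks ever enter Stage~2 by a padding argument --- all points the paper's proof glosses over.
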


\begin{proof} Define the $\sigma$-field $\gF_\ell = \sigma(\tau_1, \cdots, \tau_\ell)$. Then $\gF = \{\gF_1 \subseteq \gF_2 \subseteq \cdots\}$ is a filtration. Note that $\vd_1, \vd_2, \cdots, \vd_\kappa$ is a stochastic process adapted to the filtration $\gF$, and that $\mD_\kappa^\top \mD_\kappa = \sum_{\ell = 1}^\kappa \vd_\ell \vd_\ell^\top$. Furthermore, we note that $
		\vd_\ell \mid \gF_\ell \sim \gN(0, \frac{1}{n_{m,2}}\mI)$, so by Lemma \ref{lem:matcon}, with probability $1-\delta$,
	\begin{align}
		\norm{\sum_{\ell=1}^\kappa \vd_\ell \vd_\ell^\top -  \frac{1}{n_{m,2}}\mI} \le O(\sqrt{\frac{\kappa}{n_{m,2}}\log \frac{d}{\delta}}).
	\end{align}
\end{proof}

\end{document}